\documentclass[10pt,twocolumn]{article}
\usepackage[letterpaper,margin=0.75in,columnsep=0.28in]{geometry}
\usepackage[T1]{fontenc}
\usepackage{lmodern,microtype,amsmath,amssymb,amsthm,booktabs,graphicx,xcolor}
\usepackage[numbers,sort&compress]{natbib}
\usepackage[hyphens]{url}
\usepackage[colorlinks=true,allcolors=teal]{hyperref}
\hypersetup{pdftitle={The Delegation Blind Spot: Auditing Product Decisions from Agent Choices},pdfauthor={Shivam Gupta},pdfsubject={Decision-specific measurement of delegated choices; computational technical preprint}}
\usepackage{enumitem,fancyhdr}
\setlist{nosep,leftmargin=*}
\newtheorem{theorem}{Theorem}
\newtheorem{proposition}[theorem]{Proposition}

\theoremstyle{definition}
\newcommand{\simplex}{\Delta_K}
\newcommand{\R}{\mathbb R}

\newcommand{\row}{\operatorname{row}}
\newcommand{\1}{\mathbf 1}
\newcommand{\norm}[1]{\left\lVert#1\right\rVert}
\title{\vspace{-0.6cm}\textbf{The Delegation Blind Spot}\\[4pt]
\large Auditing Product Decisions from Agent Choices}
\author{Shivam Gupta\\Independent Research\\\small\href{mailto:shivam1720406@gmail.com}{shivam1720406@gmail.com}}
\date{20 September 2026}

\begin{document}
\raggedbottom
\maketitle
\begin{abstract}
Successful agent execution need not identify which future product improvement its user would value. We present a decision-specific audit that maps a declared observation channel and product-value contrast to compatible intervals and witness populations. Its foundations are established identification and decision theory; the contribution is an executable measurement workflow and a controlled study of its limits. A frozen experiment makes 4,800 requests to two pinned model snapshots on shared synthetic tasks. All 36 conservative primary intervals remain unresolved despite different execution accuracy. An exploratory 2,400-call follow-up records supplied preferences and resolves three of nine comparisons per model. A deterministic extractor resolves seven of nine without model calls or calibration observations, exposing unnecessary uncertainty introduced by model-generated reports. A further 14,400 controlled multinomial simulations distinguish structural ambiguity from weak identification and finite calibration precision. We propose a source-labeled decision receipt and provide an offline viewer for inspecting the audit. These results motivate preserving decision-relevant structured input and diagnosing why a decision is unresolved before collecting more telemetry. The study contains no human participants or real customer outcomes. Full proofs, raw model provenance, controlled experiments, and reproducible analyses accompany the report.
\end{abstract}

\section{Introduction}
Two customers ask their agents to book the same hotel. One values quiet; the other values proximity to a meeting. At current prices, the same hotel is best for both. Both agents execute perfectly, and the platform records indistinguishable transactions. The platform's next question is different: should it invest in soundproofing or transport links? Successful bookings need not distinguish the populations that would favor those investments.

This ambiguity is possible with human choices as well. We do not claim that delegation creates nonidentification, or that more capable agents necessarily remove information. Instead, we study a precise operational question: \emph{does a specified stream of delegated activity support a specified future product decision?} This requires distinguishing execution utility, information available to the company, and the customer's value from a future change.

Our contribution is a reproducible decision-oriented audit. It reports bounds and competing compatible populations, then separates reasons the evidence may remain inconclusive. Structural ambiguity can require a different observation; sampling uncertainty can require better calibration or more data. These interventions are not interchangeable. The computational evaluation combines actual model responses, a deterministic extraction baseline, and controlled channels with analytically known identification properties.

The audit operationalizes existing identification principles for a declared future product contrast. Its mathematical results and linear programs have direct antecedents. The study evaluates this workflow on constructed preferences and utilities; it does not establish market prevalence, customer adoption, or commercial value.

\section{Related work}
Blackwell's comparison of experiments connects observations to decision value \citep{blackwell1953}. Linear partial monitoring makes the relation between payoff contrasts and observation spans explicit \citep{kirschner2023}. Optimal recovery relates inverse estimation to distances and moduli of continuity \citep{donoho1994}. These supply the conceptual basis for our observability and lower-bound results.

The channel equation used here is closely related to label-shift estimation: a calibrated confusion matrix maps latent class prevalence to observable predictions \citep{lipton2018}. Full prevalence recovery can require invertibility, whereas a particular contrast may remain identifiable without full recovery. Partial identification under measurement error provides linear-programming bounds and confidence-set propagation \citep{finkelstein2021}. We use these established principles and expose their assumptions to product decisions.

Delegated choice itself has been studied through rationalization and menu interpretation \citep{kops2026}, consumer preference transmission \citep{kraft2026}, and controlled tests of agent responses to product attributes and stated profiles \citep{cherep2025}. Revealed-preference analysis can identify aspects of human-agent alignment from choices across menus under a mixture-of-Luce model \citep{suleymanov2026}. That target and its menu variation differ from our calibrated population contrast and coarse logs. Unresolved intervals here do not refute identification from richer observations. Synthetic profiles are not equivalent to the human evidence in preference-transmission studies. Prediction-powered inference and active sampling address efficient acquisition of true outcome labels \citep{ppi2023,asi2024,robust2025}; our initial audit pilot uses those methods as antecedents and controls.

\section{A decision-specific observation model}
Let $p\in\simplex=\{p\in\R^K:p\geq0,\1^Tp=1\}$ denote a population's proportions across a declared finite intent taxonomy. A fixed agent, interface, and task distribution induce an $m\times K$ column-stochastic channel $A$, with
\begin{equation}
 A_{ak}=\Pr(Y=a\mid Z=k),\qquad q=Ap.
\end{equation}
Here $Y$ is a logged action category. Its definition is part of the measurement design. Logging a selected product type, logging type together with context, and logging the entire continuous menu are different experiments.

For product variants 1 and 0, let $d_k$ be their independently specified difference in customer outcome for class $k$. The target is
\begin{equation}\Delta(p)=d^Tp.\end{equation}
This is a value contrast, not automatically a causal effect or financial return. Its causal interpretation requires an appropriate outcome design; an investment decision additionally requires costs and constraints. In the computational study, $d$ is a known mean over an independent finite bank of synthetic menus. We analyze uncertainty in estimated $d$ separately.

The compatible population set is $\mathcal P_q=\{p\in\simplex:Ap=q\}$. Define $L(q)=\min_{p\in\mathcal P_q}d^Tp$ and $U(q)=\max_{p\in\mathcal P_q}d^Tp$. Compactness guarantees attainment. A positive lower endpoint supports variant 1 throughout the supplied model; a negative upper endpoint supports variant 0. A straddling interval supplies witness populations favoring opposite decisions. An interval touching zero may represent a tie, not a strict reversal.

\section{Identification, ambiguity, and decision loss}
The following results are stated in our notation for auditability. Full proofs and further extensions appear in Appendix~\ref{sec:proofs}.

\begin{theorem}[Global and local identification]\label{thm:main-id}
For known $A$, the contrast is identified for every feasible $q$ if and only if $d\in\row(A)$. At a fixed $q$, identification holds if and only if $d$ is orthogonal to $\operatorname{span}(\mathcal P_q-\mathcal P_q)$.
\end{theorem}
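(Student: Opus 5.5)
The plan is to prove the local statement first, since it is nearly definitional, and then obtain the global statement by evaluating the local one at a well-chosen interior point.

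\emph{Local identification.} Fix a feasible $q$, so that $\mathcal P_q$ is nonempty, convex and compact. By definition, the contrast is identified at $q$ exactly when $L(q)=U(q)$, that is, when $d^Tp=d^Tp'$ for all $p,p'\in\mathcal P_q$. This says $d^T(p-p')=0$ for every element of the difference set $\mathcal P_q-\mathcal P_q$. Because $x\mapsto d^Tx$ is linear, vanishing on a set is equivalent to vanishing on its linear span. The local characterization is therefore immediate, and no linear-programming duality is needed.

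\emph{Global identification, sufficiency.} Suppose $d\in\row(A)$, say $d=A^Tw$. Then for every $p\in\mathcal P_q$ we have $d^Tp=w^TAp=w^Tq$, which is constant on $\mathcal P_q$. Hence the contrast is identified at every feasible $q$.

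\emph{Global identification, necessity.} Here I will use column-stochasticity. Since $\1^TA=\1^T$, we have $\1\in\row(A)$. It follows that $\ker A\subseteq\{x:\1^Tx=0\}$.

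Take the uniform population $p_0=\1/K$, which lies in the relative interior of $\simplex$, and set $q_0=Ap_0$. For any $v\in\ker A$ and sufficiently small $\varepsilon>0$:
\begin{itemize}
\item $p_0\pm\varepsilon v\geq0$, because $p_0$ is strictly positive;
\item $\1^T(p_0\pm\varepsilon v)=1$, because $\1^Tv=0$;
\item $A(p_0\pm\varepsilon v)=q_0$, because $Av=0$.
\end{itemize}
So both points lie in $\mathcal P_{q_0}$, and their difference $2\varepsilon v$ shows that $\ker A\subseteq\operatorname{span}(\mathcal P_{q_0}-\mathcal P_{q_0})$. The reverse inclusion is trivial, since any two points of $\mathcal P_{q_0}$ have the same image under $A$.

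Now assume identification holds at every feasible $q$, in particular at $q_0$. The local result gives $d\perp\ker A$, which is equivalent to $d\in(\ker A)^\perp=\row(A)$.

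The main point requiring care is this necessity direction. A priori, the simplex constraints could shrink the set of feasible perturbations below $\ker A$, which would make the correct condition $d\in\row(A)+\operatorname{span}(\1)$ rather than $d\in\row(A)$. Column-stochasticity of $A$ resolves this in two ways: it puts $\1$ inside $\row(A)$, and it guarantees that kernel directions preserve total mass. Choosing an interior $q_0$ then removes the nonnegativity obstruction.
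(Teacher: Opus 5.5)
Your proposal is correct and follows essentially the paper's proof: sufficiency from $d^Tp=w^Tq$, necessity by perturbing an interior population along a kernel direction (using column-stochasticity to get $\1^Th=0$), and the local statement from linearity of $x\mapsto d^Tx$ on pairwise differences. The only difference is organizational: you derive necessity from the local criterion by showing $\operatorname{span}(\mathcal P_{q_0}-\mathcal P_{q_0})=\ker A$ at the uniform point, whereas the paper directly picks $h\in\ker A$ with $d^Th\neq 0$ and exhibits two compatible populations with different contrasts.
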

\begin{proof}[Proof sketch]
If $d=A^Tv$, then $d^Tp=v^Tq$. Otherwise a null vector $h$ with $Ah=0$ and $d^Th\ne0$ exists. Column normalization gives $\1^Th=0$; perturbing an interior population in directions $\pm h$ constructs indistinguishable populations. The local statement follows from constancy over feasible differences.
\end{proof}
The quantifiers matter. A rank-deficient channel can identify all contrasts at a boundary observation that isolates one class. Conversely, reconstructing all class proportions is stronger than learning one particular product contrast.

\begin{theorem}[Worst compatible contrast width]\label{thm:main-width}
Let $W(A,d)=\max_{p,r\in\simplex:Ap=Ar}|d^T(p-r)|$. Then
\begin{align}
 W(A,d)&=\max_{Ah=0,\norm{h}_1\leq2}d^Th\\
 &=2\min_v\norm{d-A^Tv}_\infty.
\end{align}
\end{theorem}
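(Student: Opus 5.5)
The plan is to prove the two equalities separately. The first is a rescaling argument that uses column-stochasticity. The second is finite-dimensional norm duality, which I would establish via linear programming.

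\emph{First equality.} Write $N=\{h:Ah=0\}$ and let $M=\max_{h\in N,\norm{h}_1\le 2}d^Th$.

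For $W\le M$: take feasible $p,r\in\simplex$ with $Ap=Ar$ and set $h=p-r$. Then $h\in N$ and $\norm{h}_1\le\norm{p}_1+\norm{r}_1=2$. Since $N$ and the $\ell_1$ ball are symmetric under $h\mapsto -h$, the absolute value $|d^T(p-r)|$ is bounded by $M$.

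For $M\le W$: take $h\in N$ with $\norm{h}_1\le2$. Because $A$ is column-stochastic, $\1^TA=\1^T$, so $\1^Th=\1^TAh=0$. Hence the positive and negative parts $h^+,h^-$ carry equal mass $t=\norm{h}_1/2\le1$. If $t=0$ then $d^Th=0\le W$. Otherwise set $p=h^+/t$ and $r=h^-/t$; both lie in $\simplex$, and $A(p-r)=Ah/t=0$. Therefore $d^Th=t\,d^T(p-r)\le t\,W\le W$, using $W\ge0$. Both maxima are attained by compactness.

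\emph{Second equality.} For weak duality, take any $v$ and $h\in N$. Since $Ah=0$, we have $d^Th=(d-A^Tv)^Th$, and Hölder gives $d^Th\le\norm{d-A^Tv}_\infty\norm{h}_1\le 2\norm{d-A^Tv}_\infty$. Hence $M\le 2\min_v\norm{d-A^Tv}_\infty$.

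The reverse inequality is the main step. I would cast $M$ as a linear program:
\begin{equation*}
\text{maximize } d^T(h^+-h^-)\quad\text{s.t.}\quad A(h^+-h^-)=0,\quad \1^T(h^++h^-)\le2,\quad h^\pm\ge0.
\end{equation*}
This program is feasible (take $h=0$) and bounded, so LP strong duality applies. Its dual has variables $v\in\R^m$ for the equality constraints and $s\ge0$ for the norm constraint. The dual reads: minimize $2s$ subject to $d-A^Tv\le s\1$ (the dual constraint for $h^+$) and $-(d-A^Tv)\le s\1$ (the dual constraint for $h^-$). These two constraints together say exactly that $s\ge\norm{d-A^Tv}_\infty$, so the dual optimum equals $2\min_v\norm{d-A^Tv}_\infty$.

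Strong duality equates the primal and dual optima, and it also shows that the minimum over $v$ is attained. Alternatively, attainment follows because $\row(A)$ is a closed subspace and we are computing a distance to it. The only real care is in deriving the dual correctly. The separate nonnegative parts $h^\pm$ turn the $\ell_1$ constraint into a single linear constraint, and this is what produces the $\ell_\infty$ norm in the dual.

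As a check, the formula gives $W=0$ exactly when $d\in\row(A)$, consistent with Theorem~\ref{thm:main-id}.
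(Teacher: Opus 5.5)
Your proof is correct and follows the paper's route: the positive/negative-part decomposition for the first equality, and finite-dimensional LP strong duality for the second. The differences are minor. You rescale $h^\pm$ by $t$ and use $W\ge 0$, where the paper pads both parts with a common $(1-s)u$ to realize $h$ exactly as $p-r$. You also dualize the split-variable maximization over $h$ rather than the minimization over $v$; this is the same primal--dual pair read from the other side, and your Hölder step just makes weak duality explicit.
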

Every feasible difference has zero total mass and $\ell_1$ norm at most two. Conversely, its positive and negative parts can be completed with a common population mass. Linear-programming duality gives the distance expression. This is a standard inverse-problem quantity specialized to a decision contrast, not a new duality principle. $W$ is global: the interval at an actual $q$ can be narrower.

For action $b\in\{0,1\}$, value regret is $\ell(b,\Delta)=\max(0,\Delta)-b\Delta$.
\begin{theorem}[Irreducible decision loss]\label{thm:main-risk}
Fix an exactly known $q$ with attainable interval $[L,U]$. Every estimator based only on any number of iid draws from $q$ has worst compatible mean absolute error at least $(U-L)/2$ and mean squared error at least $(U-L)^2/4$. If $L<0<U$, the exact-observation minimax randomized value regret is
\begin{equation}
 R^*(q)=\frac{-LU}{U-L},\qquad
 t^*(q)=\frac{U}{U-L},
\end{equation}
where $t^*$ is the probability of choosing variant 1. Outside the straddling case, minimum regret is zero.
\end{theorem}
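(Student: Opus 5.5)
The plan is a two-point (Le Cam–style) argument for the estimation bounds, followed by an explicit two-point game for the regret. Since $\mathcal P_q$ is compact and $d^Tp$ is linear, there are populations $p_L,p_U\in\mathcal P_q$ with $d^Tp_L=L$ and $d^Tp_U=U$. Both satisfy $Ap_L=Ap_U=q$, so for any $n$ the law of $n$ iid draws is $q^{\otimes n}$ under either population. Hence any estimator $\hat\Delta$, possibly randomized, has the same distribution $\mu$ under $p_L$ and $p_U$.

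\textbf{Estimation bounds.} For any $x$, the triangle inequality gives $|x-L|+|x-U|\ge U-L$. Integrating against $\mu$,
\[
\tfrac12\bigl(E|\hat\Delta-L|+E|\hat\Delta-U|\bigr)\ge (U-L)/2,
\]
so the larger of the two errors is at least $(U-L)/2$. Similarly, $(x-L)^2+(x-U)^2\ge (U-L)^2/2$, with the minimum at the midpoint. Averaging gives the squared-error bound $(U-L)^2/4$. The worst case over all of $\mathcal P_q$ dominates the worst case over these two points.

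\textbf{Regret, lower bound.} Under exact observation, a decision rule is just a probability $t$ of choosing variant 1. It depends only on $q$, so it is the same for every compatible $p$. The expected regret at contrast $\Delta$ is
\[
r(t,\Delta)=\max(0,\Delta)-t\Delta.
\]
For fixed $t$ this is convex (piecewise linear) in $\Delta$. Every $\Delta=d^Tp$ with $p\in\mathcal P_q$ lies in $[L,U]$, and both endpoints are attained. So the worst-case regret is $\max\{r(t,L),r(t,U)\}=\max\{-tL,(1-t)U\}$ when $L<0<U$.

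\textbf{Regret, minimax value.} The first term $-tL$ increases in $t$ and the second $(1-t)U$ decreases, so the minimax occurs where they are equal: $-tL=(1-t)U$. This gives $t^*=U/(U-L)$ and value $-LU/(U-L)$. To confirm this is the minimax over all randomized procedures, I would note that additional iid draws carry no information beyond $q$, by the same indistinguishability as above. Equivalently, a prior placing mass $U/(U-L)$ on $L$ and $-L/(U-L)$ on $U$ equalizes the two actions and yields the same Bayes risk.

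\textbf{Non-straddling case.} If $L\ge0$, choosing $t=1$ gives zero regret for every $\Delta\ge0$. If $U\le0$, choosing $t=0$ gives $\max(0,\Delta)=0$.

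The only delicate step is being explicit that estimators may use arbitrary sample sizes and internal randomization. This is handled by noting that the sampling distribution is identical under $p_L$ and $p_U$, so the argument reduces to a statement about a single distribution $\mu$ on estimates or actions. The rest is routine calculus on piecewise-linear functions.
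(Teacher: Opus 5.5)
Your proposal is correct and follows essentially the same route as the paper. Both arguments use the indistinguishable endpoint populations, average the pointwise inequalities $|x-L|+|x-U|\ge U-L$ and $(x-L)^2+(x-U)^2\ge (U-L)^2/2$, reduce regret to the two endpoints, and equate $(-L)t$ with $U(1-t)$. Your least-favorable-prior check (mass $U/(U-L)$ on $L$) is a correct extra confirmation that the paper omits and the stated result does not need.
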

Endpoint populations generate the same observation law. Estimation lower bounds follow from triangle and squared-loss identities; balancing endpoint regrets $(-L)t$ and $U(1-t)$ proves the decision result. Randomization is a theoretical decision rule, not a recommendation to randomize a costly business commitment. Deterministic minimax regret is $\min(-L,U)$ in the straddling case.

\paragraph{Example.} With identical columns $A_{\cdot1}=A_{\cdot2}=(0.8,0.2)^T$ and $d=(-0.3,0.5)^T$, all populations have the same action law. The interval is $[-0.3,0.5]$ and $R^*=0.1875$. More samples from this channel cannot resolve it. Direct outcome feedback or a new informative channel can.

\section{Uncertainty and measurement design}
\subsection{Attainable bounds with uncertain channels}
Suppose $0\leq\underline A\leq\overline A\leq1$, every channel-column box intersects the probability simplex, and field frequencies lie in $[\underline q,\overline q]$. Introduce joint masses $J_{ak}=A_{ak}p_k$ and impose
\begin{align}
 &p\in\simplex,\quad J\geq0,\quad \sum_aJ_{ak}=p_k,\\
 &\underline A_{ak}p_k\leq J_{ak}\leq\overline A_{ak}p_k,\\
 &\underline q_a\leq\sum_kJ_{ak}\leq\overline q_a.
\end{align}
Minimize and maximize $d^Tp$ over this linear feasible set. The reparameterization is exact for the declared rectangular model: divide each positive-mass column of $J$ by $p_k$ to recover its channel; choose any feasible channel column when $p_k=0$. Every original model maps back to such a joint mass. Sharpness is relative to this uncertainty description, not to unspecified correlated constraints.

The implementation verifies solver witnesses against the constraints and returns failure for infeasibility. Empty feasible sets must not become confident recommendations. If the channel and field sets cover with error probabilities $\alpha_A$ and $\alpha_q$, the target interval covers with probability at least $1-\alpha_A-\alpha_q$, conditional on model correctness. Independence of these coverage events is unnecessary for the union bound. We use simultaneous coordinate Clopper--Pearson boxes within each channel/field calculation. Their iid sampling assumptions must match the experiment.

If $d\in[\underline d,\overline d]$, optimize $\underline d^Tp$ for the lower endpoint and $\overline d^Tp$ for the upper endpoint. Nonnegativity of $p$ makes this exact for a rectangular outcome set. Its failure probability adds to the preceding bound. Unknown classes and channel drift are not automatically covered by these sampling intervals. Separately constructed coordinate boxes for coarse and fine logs need not be nested. Thus finite-sample interval widths across schemas are not ordered by the exact-channel coarsening theorem.

\subsection{A bias-aware linear certificate}
For a known reference channel, select action weights $v$ before observing the field sample. Write $\epsilon=\norm{d-A^Tv}_\infty$ and $s=\max_a v_a-\min_a v_a$. For $n$ iid observations from a deployed channel whose class columns are within total variation $\tau$ of the reference,
\begin{equation}
 \left|\frac1n\sum_i v_{Y_i}-d^Tp\right|
 \leq \epsilon+s\left(\tau+\sqrt{\frac{\log(2/\alpha)}{2n}}\right)
\end{equation}
with probability at least $1-\alpha$. The terms separate approximation, channel drift, and sampling error. Our code minimizes this bound by linear programming independently of field observations. Estimated calibration must be covered by $\tau$ or another uncertainty argument; substituting a point-estimated channel is not a coverage guarantee.

\subsection{Record the measurement condition}
Suppose a probe $e$ is independently assigned with probability $\rho_e>0$, logged, and preserves population composition. Its joint channel is $B=\operatorname{vstack}(\rho_eA_e)$. It identifies $d$ exactly when $d$ belongs to the joint row span. Discarding the probe label replaces this with $\sum_e\rho_eA_e$ and can destroy information. For example, equally likely identity and row-swapped identity channels are fully informative when the probe is recorded but completely pooled when it is omitted. This is an implication of coarsening and classical experiment comparison, not evidence that any particular agent update is a coarsening.

\section{Computational experiment}
\subsection{Frozen design and provenance}
The main design was committed and pushed before its API calls. It is an exploratory, prospectively frozen computational study, not an externally preregistered human experiment. A separate one-call connectivity test is excluded. The main run queries GPT-5.4 Mini and GPT-5.4 Nano, both pinned to their 17 March 2026 snapshots, on identical task records, interleaving model requests to reduce timing confounds. Prompts, exact requests, response identifiers, returned model versions, token usage, errors, and source hashes are retained. There are no tools or adaptive prompt changes during the run.

Three domain framings (travel, cloud plans, workflow software) share a mathematical generator. Each task has four alternatives with independently perturbed attributes, one of three context regimes, and randomized option order. Four declared intent classes use explicit numerical weights and a noncompensatory minimum-attribute penalty. Models choose an option ID under a strict output schema. These are synthetic utility objectives, not elicited human preferences; domain framings are not independent real-world datasets.

For each model and domain, calibration samples 80 tasks per intent class. Three independently sampled field cohorts (160 tasks each) have specified class mixtures: positive $(.65,.10,.15,.10)$, negative $(.15,.10,.65,.10)$, and near $(.42,.18,.27,.13)$. Thus each model is assigned 2,400 requests, totaling 4,800. Successful responses and transport failures are reported separately. Separate random streams generate calibration, field cohorts, future-outcome banks, and outcome samples. The target contrasts compare best attainable utility after capacity versus flexibility investment, each with an identical affordability tradeoff, over 4,096 independent synthetic menus per domain. This assumes optimal future selection; it is not measured utility of the tested model under those future variants.

\subsection{Targets, schemas, and baselines}
The primary target uses exact mean class contrasts from that finite outcome bank. It isolates observation-channel measurement from outcome estimation. A secondary analysis uses 2,048 samples from the bank and bounded-outcome uncertainty. Neither establishes real customer value.

We compare two declared logs: selected semantic product type (plus failure), and type paired with the three context regimes (plus failure). Both omit full continuous menu information. Unresolved intervals under these schemas are not proof of nonidentification from every possible observable feature.

Baselines include constrained least-squares prevalence estimation inspired by label-shift correction, a percentile bootstrap conditional on the estimated channel, and the joint-mass uncertainty diagnostic. The fixed-channel bootstrap omits calibration uncertainty and has no claimed finite-sample coverage. Exact-optimum, uniform-random, and fixed-default choosers are algorithmic controls, not additional models or participants. We measure task success, synthetic utility regret, interval width, decision resolution, compatibility with the known synthetic target, and provider failures.

Primary intervals allocate $\alpha_A=\alpha_q=.02$, giving at least 96\% marginal coverage per contrast under the stated assumptions. Secondary outcome uncertainty adds $.01$, giving 95\%. These are not simultaneous guarantees across the model/domain/cohort/schema comparisons. We report all conditions and do not treat their observed coverage fraction as an independent repeated-sampling validation of the theorem.

\subsection{Exploratory explicit-preference receipts}
After observing that all primary intervals were unresolved, we froze a separate follow-up before its new API calls. It asks the agent to report the largest of the four preference weights already present in its input. Selection uses the first 40 lexicographic calibration task IDs per domain/class and the first 80 field IDs per domain/cohort, without consulting their response correctness or utility. This yields 1,200 new requests per model. The original actions for exactly those task IDs provide an equal-count baseline. The primary follow-up comparison is attribute-only reporting against action-only logging, with nine domain/cohort conditions per model.

The receipt removes the offer menu. Its 1,200 calls per model repeat only 12 distinct prompts: four supplied weight profiles in three framings. Each profile has a unique leading attribute, so an accurate receipt directly reveals its synthetic class. It tests whether this explicit report produces a useful observation channel in the constructed problem. It does not test inference of private preferences or independently validate the supplied weights. A deterministic parser could recover these weights directly; a language model is not necessary for that operation in a deployed system. Matched observation counts do not equalize tokens, monetary cost, privacy exposure, or human elicitation burden.

The snapshots are unchanged. Receipt requests explicitly set reasoning effort to none; the primary requests use the documented default of none. The prompt and task change, rather than isolating a passive logging intervention. Repeated calls require a stable, independent response mechanism for the categorical sampling model. Moreover, the follow-up was chosen after the primary results and reuses original field draws and actions. Its fixed-design marginal interval statements are not selective guarantees conditional on that choice. Resolution counts are exploratory descriptions, not a confirmatory significance test.

\subsection{AI-assisted research development and verification}
OpenAI Codex was used in an iterative author-directed workflow for problem formulation, literature retrieval, proof drafts, experiment protocols, code, analysis, visualizations, and manuscript text. Internal AI critiques informed revisions; they are not external peer review. Verification includes primary-source comparison, mathematical counterexamples, numerical constraint checks, automated tests, recorded API provenance, and exact reanalysis. These automated and source-based checks do not certify human-author review. The author is responsible for scrutinizing the work and its claims before submission. Research assistance is distinct from the two pinned models evaluated in the experiments.

\section{Results}
\subsection{Execution competence and decision uncertainty}
The primary run recorded 4,800 attempts, 4,783 valid choices, and 17 transport failures. Both returned model identifiers match their pinned requested snapshots. Failed records are retained as an observable category. No retries replace them.

Table~\ref{tab:model-performance} reports field-task performance, counting a failure as no delivered utility. Mini chooses a utility maximizer more often and incurs lower mean synthetic regret. These are descriptive results for the frozen generator, not a general model ranking. Calibration tasks are excluded from these field-performance denominators.

\begin{table}[t]
\centering\small
\begin{tabular}{@{}lrr@{}}
\toprule Field metric & Mini & Nano \\\midrule
Tasks & 1,440 & 1,440 \\
Utility-maximizing choice & 73.19\% & 55.69\% \\
Mean synthetic regret & 0.0186 & 0.0573 \\
Field delivery failures & 0.35\% & 0.28\% \\\bottomrule
\end{tabular}
\caption{Real model responses to constructed utility objectives. Regret is best available utility minus delivered utility, on the declared $[0,1]$ scale.}
\label{tab:model-performance}
\end{table}

The joint-mass diagnostic leaves all 18 conditions per model unresolved. It contains each known synthetic target, but this does not establish an empirical coverage rate: conditions share calibration data, and the set is small and selected. Zero wrong certified decisions is achieved here by abstaining on every condition. Even the optimal-choice control leaves all 18 corresponding intervals unresolved. At this budget, conservative coordinate boxes are not a practically sufficient product-selection procedure.

The constrained point baseline selects the wrong sign in 2 of 18 Mini conditions and 4 of 18 Nano conditions. The fixed-channel percentile bootstrap contains the known target in 14 of 18 and 16 of 18 conditions, respectively. These are descriptive counts, not a calibrated coverage comparison. The bootstrap omits calibration error; the conservative method includes it but sacrifices decisiveness. This experiment does not isolate intrinsic channel nonidentification from finite-sample uncertainty.

\begin{figure*}[t]
\centering\includegraphics[width=.94\textwidth]{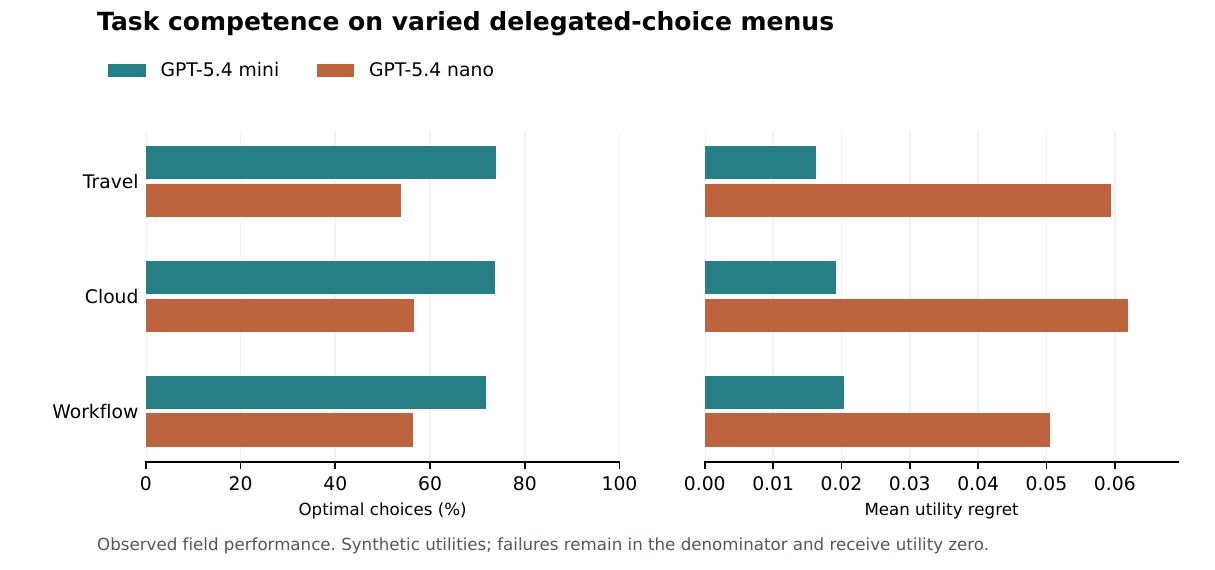}
\caption{Primary field-task performance by semantic framing. The three framings share a mathematical generator. Real provider failures remain in the operational scores.}
\label{fig:performance}
\end{figure*}

\begin{figure*}[t]
\centering\includegraphics[width=.95\textwidth]{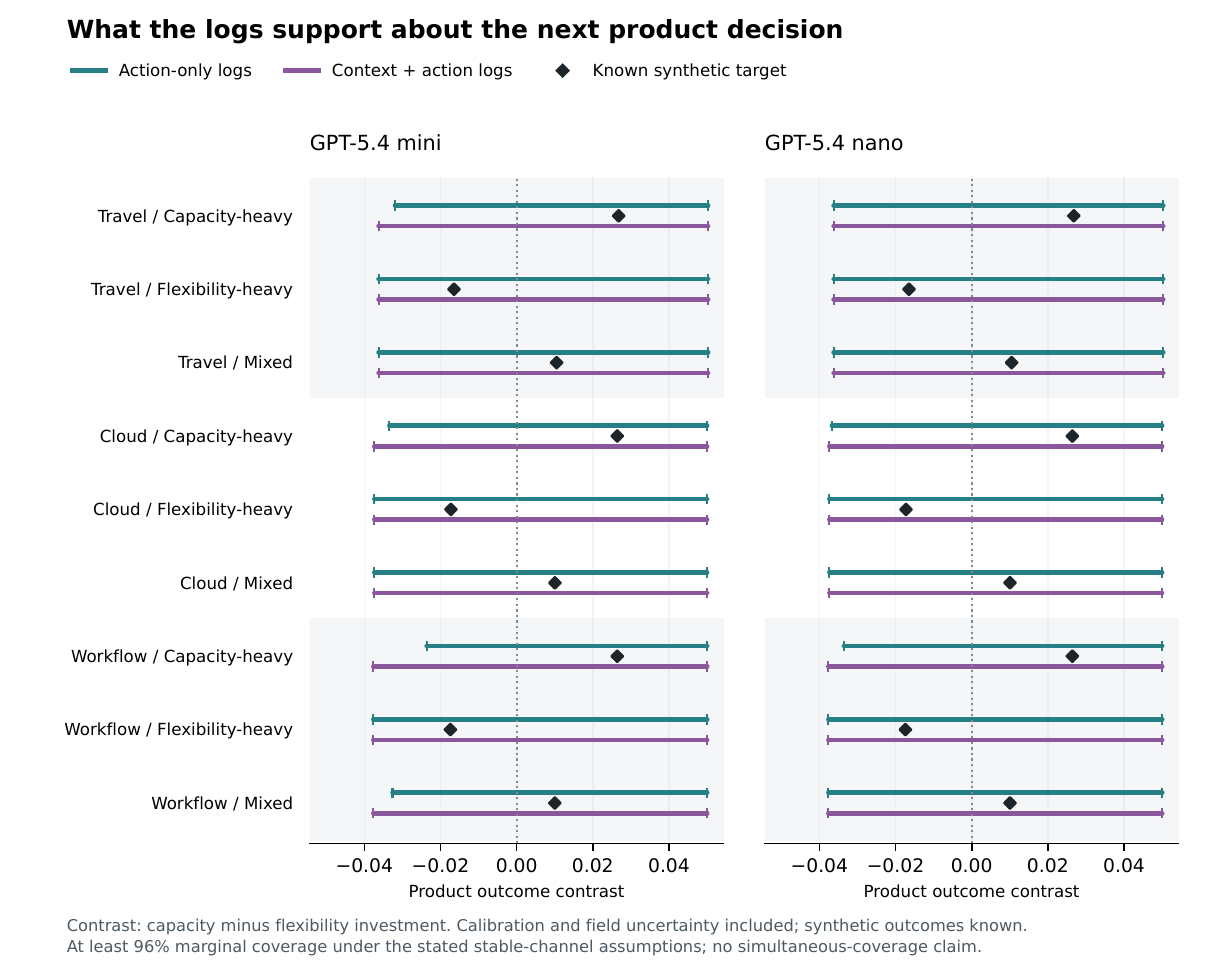}
\caption{Primary decision intervals for the known finite-bank synthetic contrast. Each row uses the same independent calibration within a domain. Markers show the evaluation-only true target. All intervals cross zero. Nominal marginal coverage is at least 96\% under the stated model and sampling assumptions; this is not a simultaneous guarantee.}
\label{fig:intervals}
\end{figure*}

\subsection{Compute and interpretation}
Recorded tokens imply an estimated US\$1.113 for the primary run at the documented standard rates. This is not an invoice: failed transport requests may have unreported billable usage. Timeouts and connection resets are not evidence of model reasoning errors. Their temporal clustering also cautions against treating every operational error as an independent stationary draw. Conditional mathematical coverage statements must not be mistaken for verified provider independence.

The secondary analysis also propagates synthetic outcome estimation error and is more conservative. Exact outcome-bank values describe best attainable utility after future investment, not realized future behavior of the tested models. The primary findings justify investigating additional measurement channels, not declaring that all delegated product activity is inherently uninformative.

\subsection{Explicit preference information at matched counts}
The exploratory follow-up recorded 2,400 completed receipt requests, with no delivery failures. Both models returned their requested snapshots and correctly reported the largest supplied weight on every calibration and field request. This is accuracy on 12 repeated, explicitly specified prompts, not evidence of general preference understanding. The two models therefore induce identical empirical receipt channels and identical receipt intervals on the shared task subset.

Each model's receipt channel resolves three of nine primary follow-up conditions, compared with zero of nine using the matched original actions. All three resolved conditions are the positive cohort, one per domain; none contradicts the known synthetic target. The negative and near cohorts remain unresolved. Mean interval width decreases from 0.0872 for action logs to 0.0436 for receipts, a 50.0\% reduction. These figures summarize the selected conditions and do not constitute a significance test or a post-selection coverage guarantee.

\begin{table}[t]
\centering\small
\begin{tabular}{@{}lrr@{}}
\toprule Paired measure, per model & Actions & Receipts \\\midrule
Calibration per class/domain & 40 & 40 \\
Field per cohort/domain & 80 & 80 \\
Resolved contrasts / conditions & 0 / 9 & 3 / 9 \\
Incorrect resolved contrasts & 0 & 0 \\
Mean interval width & 0.0872 & 0.0436 \\\bottomrule
\end{tabular}
\caption{Exploratory comparison. Both models have the same reported interval summaries. Perfectly reporting the supplied leading attribute directly reveals the constructed intent class. Counts do not equalize monetary cost or elicitation burden.}
\label{tab:receipt}
\end{table}

\begin{figure*}[t]
\centering\includegraphics[width=.94\textwidth]{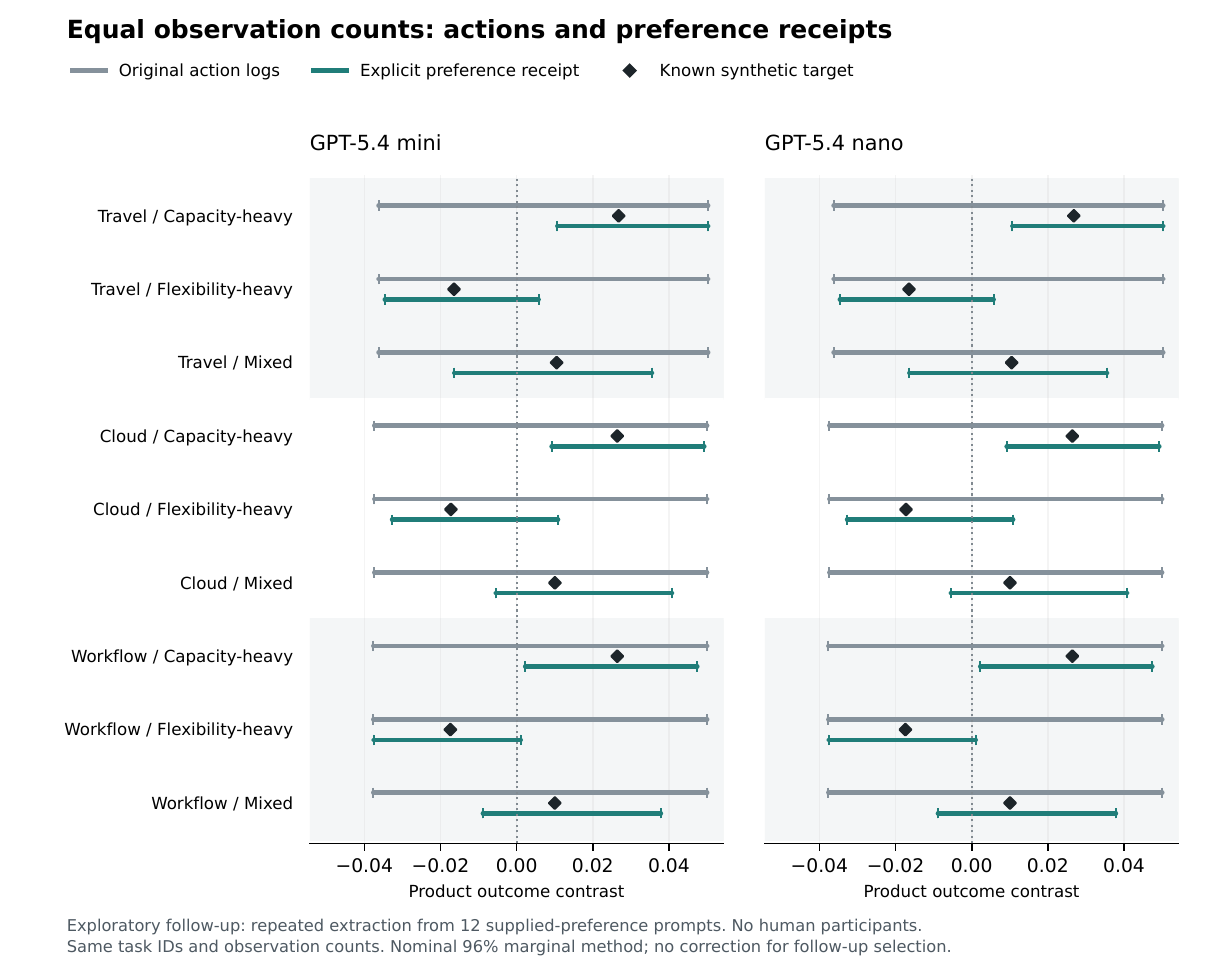}
\caption{Action-only and attribute-only intervals on the same selected task IDs. The receipt collects explicit preference information omitted from the action log; it does not discover an unobserved human preference. The follow-up was chosen after the primary results, so these are exploratory comparisons without selective-inference correction.}
\label{fig:receipt}
\end{figure*}

The result separates two limitations. Even a directly observed class does not resolve every small contrast under this finite-sample procedure. Yet the same number of observations is more useful when it records a relevant class attribute instead of a behavioral proxy. The receipt's remaining uncertainty is sampling and channel-calibration uncertainty within the declared model, rather than a demonstrated loss of the supplied class information. Since a deterministic extractor would provide the same field, this experiment supports explicit measurement design, not the necessity of an LLM receipt generator.

Reported usage implies an additional US\$0.372 for the follow-up. Across the two studies, there are 7,200 attempted API requests, 7,183 valid responses, and 17 retained transport failures. The excluded connectivity test is not part of these counts. All preferences and product outcomes remain synthetic.

\subsection{A deterministic baseline for supplied preferences}
The receipt experiment asks models to recover information already in a structured input. A new offline control parses those supplied weights directly and checks membership in the declared four-profile taxonomy. It uses the same 720 field records across the nine conditions, without their private intent labels. The known identity channel eliminates report-calibration uncertainty. With $\alpha_q=.04$, field-frequency bounds resolve seven of nine comparisons, with zero incorrect resolutions and mean width 0.02285. There are no calibration observations or new API calls. A direct Hoeffding interval for the bounded variable $d_Z$, using the same $.04$ error budget, also resolves seven; its mean width is 0.02727. The methods are reported separately, not intersected.

This baseline changes the engineering interpretation of the receipt result. Given these structured inputs, routing the supplied attribute through an LLM and estimating its error channel is unnecessary. Preserving the field is simpler and produces more decisive intervals in this selected benchmark. It still does not validate the supplied profile against a real customer. These reanalyses use the previously selected field draws and remain exploratory.

\subsection{Structural ambiguity versus finite precision}
A separate controlled simulation isolates limitations conflated by the primary model experiment. Let
\begin{equation}
 A_\eta=(1-\eta)\mathbf1\mathbf1^T/4+\eta I_4,
 \quad d=(.06,.02,-.04,-.02)^T.
\end{equation}
The channel eigenvalues are $1,\eta,\eta,\eta$. At $\eta=0$, every population has the same action law and the compatible contrast interval is $[-.04,.06]$. For every $\eta>0$, $A_\eta$ is invertible and the exact-observation interval has zero width. Identification alone therefore does not describe the difficulty of the finite-sample inverse problem.

For known $A_\eta$ and $\eta>0$, write $\bar d=\mathbf1^Td/4$. The weights $v=\bar d\mathbf1+(d-\bar d\mathbf1)/\eta$ satisfy $A_\eta^Tv=d$, with range $\operatorname{range}(d)/\eta$. The certificate in Section~5.2 gives error at most
\begin{equation}
 \frac{\operatorname{range}(d)}{\eta}
 \sqrt{\frac{\log(2/\alpha)}{2n}}.
\end{equation}
Thus $n\geq\operatorname{range}(d)^2\log(2/\alpha)/(2\eta^2\epsilon^2)$ suffices for radius $\epsilon$ with these fixed weights. This is a sufficient Hoeffding bound, not an optimal sample-complexity or impossibility claim. Channel estimation adds another uncertainty source.

The sweep uses $\eta\in\{0,.01,.03,.1,.3,1\}$, calibration counts per class $c\in\{40,160,640,2560\}$, field counts $2c$, and the three original population mixtures. Each of 72 cells has 200 independent complete repetitions, totaling 14,400 multinomial simulations and 43,200 interval calculations. The three methods use exact $A$ with uncertain field frequencies, uncertain $A$ with exact $q$, or uncertainty in both. The first two have 98\% marginal guarantees, and the joint procedure has 96\%; these are uncertainty-source ablations, not an equal-confidence ranking. Methods share samples within a repetition; repetitions and cells use independent streams. All outcomes are constructed, and no new model or human observations are generated.

For the positive mixture with $\eta=.01$, even $c=2560$ and 5,120 field observations leave every joint interval unresolved: the median width remains .10 although structural width is zero. At $\eta=.1$ and the same counts, field-only, calibration-only, and joint resolution rates are 98\%, 54\%, and 1\%, respectively. The joint median width is 0.07651. These examples expose calibration and weak-signal costs; they do not identify the unknown population channels of the earlier LLM experiment.

All cells and repetitions are released. Coverage and resolution denominators retain infeasible cases; width quantiles condition on feasibility. There are 84 infeasible intervals and four incorrect resolutions, all in the field-only ablation. Observed field-only coverage ranges from 96\% to 100\% across cells, with Monte Carlo errors and exact binomial intervals reported. All joint and calibration-only intervals contain the target in these runs. A 200/200 count has a two-sided 95\% binomial lower bound of approximately .982, so it is not evidence of perfect coverage. No simultaneous claim over cells is made.

\begin{figure*}[t]
\centering\includegraphics[width=.98\textwidth]{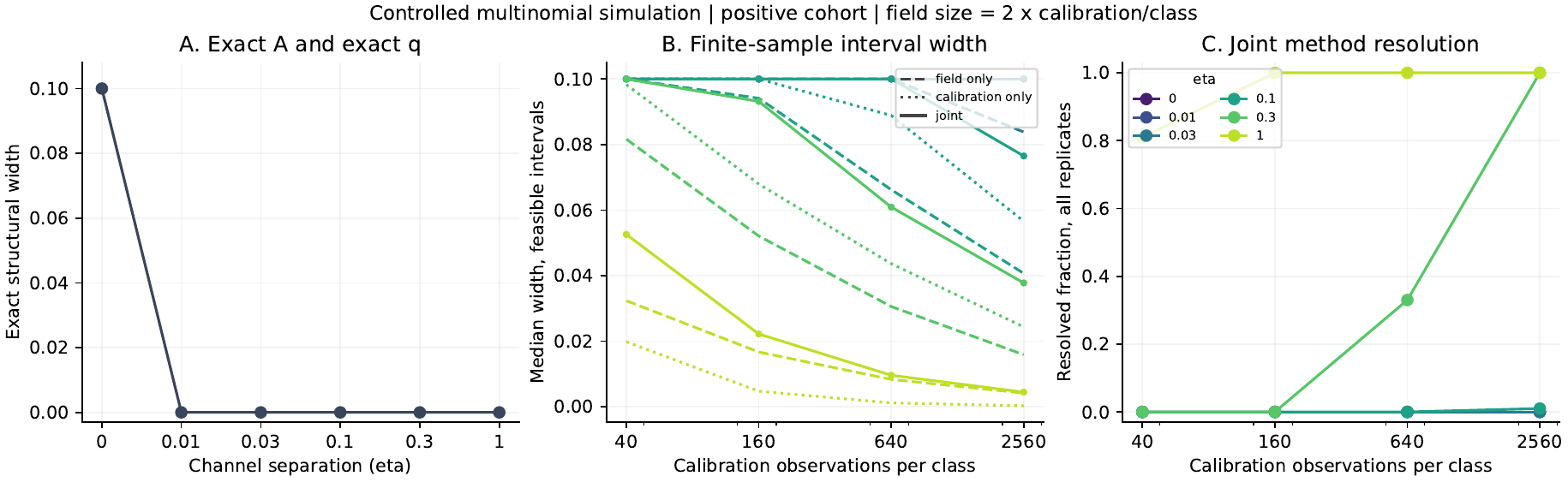}
\caption{Controlled separation of structural and statistical limitations. The figure displays the positive mixture; all three mixtures are in the released tables. Exact identification at every $\eta>0$ coexists with wide finite-sample intervals. Width summaries condition on feasibility; resolution denominators retain all repetitions. These are multinomial simulations, not additional LLM calls.}
\label{fig:precision}
\end{figure*}

\section{From telemetry to a decision receipt}
The accompanying offline viewer exposes the recorded audit to an analyst. Its selectors cover model, domain framing, field mixture, and logging condition. It displays observation counts, the known synthetic contrast, the interval, and its endpoint witness populations. The viewer renders precomputed LP results and their provenance; it neither silently reruns inference nor asks an LLM to recommend an investment. Its implementation is an inspectable prototype, with no measured usability or effect on analyst decisions.

The practical unit of measurement need not be a complete customer profile. For a declared future decision, it is enough to collect evidence about the corresponding contrast. This suggests a \emph{decision receipt}: a compact record of the measurement condition and the evidence the decision actually requires. We use this term for a proposed interface artifact, not an established standard or a claim of terminology priority.

A useful receipt distinguishes an observed action, preferences explicitly supplied to the agent, an agent's inference, and an outcome confirmed by the customer. Those sources have different evidentiary status. It records the agent and interface versions, relevant constraints, a missing-information state, and the specific decision for which the record is intended. A supplied preference does not become validated because an agent repeats it confidently. The receipt experiment tests only one minimal field, a reported leading attribute; it does not validate the entire proposed artifact.

Three design consequences follow. First, preserve the assigned measurement condition: pooling probe identities can erase the information gained from probing. Second, ask whether a cheaper measurement identifies the current contrast before trying to reconstruct all preferences. Third, revisit the receipt when the decision changes. Evidence sufficient for capacity versus flexibility need not identify willingness to pay, trust, or the value of an unrelated feature. Direct, consented outcome measurement may be both more informative and simpler than recovering a latent taxonomy. The research package includes a receipt specification separating these cases, with no claim of deployed customer effectiveness.

\section{Implications and limitations}
A valid interpretation starts with a specific product decision and outcome. Stable activity need not imply stable preference composition, and changed activity after an agent update need not imply changed demand. These are possible confoundings, not findings of widespread market failure. Calibration should record the agent, interface, prompt distribution, and time window.

The diagnostic is most useful when it makes the disagreement inspectable. A witness pair identifies which customer mixtures still fit the same declared evidence but favor different changes. That can motivate targeted, consented feedback. It does not prove that the proposed question is unbiased, cost-effective, or privacy preserving. An agent's explanation is another potentially useful observation channel, not ground truth about its customer.

Ordinary A/B testing remains valid for measured outcomes under its assumptions. The concern is substituting agent activity for unmeasured value. If a sound experiment directly measures the relevant customer outcome, this particular gap may already be resolved. Retention and support contacts can also supply information, subject to their own timing and selection issues.

The model assumes a declared finite taxonomy, stable within-class behavior, and a meaningful contrast. Missing classes, preference elicitation errors, selective feedback, correlated API calls, and channel drift can invalidate calibration transport. The proofs are conditional mathematical statements. The experiment uses two snapshots and one synthetic task family, so it cannot establish a general relationship between agent capability and product-learning value. Conservative intervals can be too wide to act on; confident point estimates can obscure unmeasured uncertainty. Both costs matter.

\paragraph{Human validation.} No human participants were recruited, and no simulated profile is counted as a person. The released offline instrument is an investigator preview. It labels its aid as a deterministic reference chooser and its exports as non-evidence. A customer study still requires a reviewed protocol, genuine recruitment, informed consent, an independently meaningful outcome measure, and actual responses. This paper's empirical claims are restricted to computation.

\section{Conclusion}
Successful execution and evidence for a future product decision are distinct objectives. A calibrated channel model makes that distinction testable for a declared contrast and logging scheme. Bounds and witness populations expose what the observations leave unresolved; uncertainty accounting prevents calibration estimates from becoming unwarranted certainty. The deterministic parser and controlled channels sharpen the practical lesson: preserve useful structured input, and distinguish structural ambiguity from statistical imprecision before changing the interface or collecting more logs. The released package provides proofs, actual model traces, controls, and reproducible analysis for this task. Extending its conclusions to customers requires customer evidence.

\paragraph{Availability.} Code, source, numerical records, and experiment provenance are at \url{https://github.com/shi1720/delegation-blind-spot}. The repository documents limitations and venue-specific requirements. This is a technical preprint, not a claim of peer review or acceptance.
\begingroup\raggedright\small
\bibliographystyle{plainnat}
\bibliography{references}

\begin{thebibliography}{13}
\providecommand{\natexlab}[1]{#1}
\providecommand{\url}[1]{\texttt{#1}}
\expandafter\ifx\csname urlstyle\endcsname\relax
  \providecommand{\doi}[1]{doi: #1}\else
  \providecommand{\doi}{doi: \begingroup \urlstyle{rm}\Url}\fi

\bibitem[Angelopoulos et~al.(2023)Angelopoulos, Bates, Fannjiang, Jordan, and
  Zrnic]{ppi2023}
Anastasios~N. Angelopoulos, Stephen Bates, Clara Fannjiang, Michael~I. Jordan,
  and Tijana Zrnic.
\newblock Prediction-powered inference.
\newblock \emph{Science}, 382\penalty0 (6671):\penalty0 669--674, 2023.
\newblock \doi{10.1126/science.adi6000}.
\newblock URL \url{https://arxiv.org/abs/2301.09633}.

\bibitem[Blackwell(1953)]{blackwell1953}
David Blackwell.
\newblock Equivalent comparisons of experiments.
\newblock \emph{The Annals of Mathematical Statistics}, 24\penalty0
  (2):\penalty0 265--272, June 1953.
\newblock \doi{10.1214/aoms/1177729032}.

\bibitem[Cherep et~al.(2026)Cherep, Ma, Xu, Shaked, Maes, and
  Singh]{cherep2025}
Manuel Cherep, Chengtian Ma, Abigail Xu, Maya Shaked, Pattie Maes, and Nikhil
  Singh.
\newblock A framework for studying {AI} agent behavior: Evidence from consumer
  choice experiments.
\newblock In \emph{International Conference on Learning Representations}, 2026.
\newblock URL \url{https://openreview.net/forum?id=LUrToUPS4x}.

\bibitem[Donoho(1994)]{donoho1994}
David~L. Donoho.
\newblock Statistical estimation and optimal recovery.
\newblock \emph{The Annals of Statistics}, 22\penalty0 (1):\penalty0 238--270,
  1994.
\newblock \doi{10.1214/aos/1176325367}.

\bibitem[Finkelstein et~al.(2021)Finkelstein, Adams, Saria, and
  Shpitser]{finkelstein2021}
Noam Finkelstein, Roy Adams, Suchi Saria, and Ilya Shpitser.
\newblock Partial identifiability in discrete data with measurement error.
\newblock In Cassio de~Campos and Marloes~H. Maathuis, editors,
  \emph{Proceedings of the Thirty-Seventh Conference on Uncertainty in
  Artificial Intelligence}, volume 161 of \emph{Proceedings of Machine Learning
  Research}, pages 1798--1808. PMLR, 2021.
\newblock URL \url{https://proceedings.mlr.press/v161/finkelstein21b.html}.

\bibitem[Kirschner et~al.(2023)Kirschner, Lattimore, and Krause]{kirschner2023}
Johannes Kirschner, Tor Lattimore, and Andreas Krause.
\newblock Linear partial monitoring for sequential decision making: Algorithms,
  regret bounds and applications.
\newblock \emph{Journal of Machine Learning Research}, 24\penalty0
  (346):\penalty0 1--45, 2023.
\newblock URL \url{https://jmlr.org/papers/v24/22-1248.html}.

\bibitem[Kops and Tsakas(2026)]{kops2026}
Christopher Kops and Elias Tsakas.
\newblock Choice via {AI}, February 2026.
\newblock URL \url{https://arxiv.org/abs/2602.04526}.
\newblock arXiv preprint.

\bibitem[Kraft and Larsen(2026)]{kraft2026}
Andreas Kraft and Poet Larsen.
\newblock Consumer preference transmission in agentic markets.
\newblock SSRN Working Paper 6864181, June 2026.
\newblock URL \url{https://ssrn.com/abstract=6864181}.

\bibitem[Lattimore and Szepesv{\'a}ri(2020)]{lattimore2020}
Tor Lattimore and Csaba Szepesv{\'a}ri.
\newblock \emph{Bandit Algorithms}.
\newblock Cambridge University Press, 2020.
\newblock \doi{10.1017/9781108571401}.
\newblock URL \url{https://tor-lattimore.com/downloads/book/book.pdf}.

\bibitem[Li et~al.(2025)Li, Zrnic, and Candes]{robust2025}
Puheng Li, Tijana Zrnic, and Emmanuel Candes.
\newblock Robust sampling for active statistical inference.
\newblock In D.~Belgrave, C.~Zhang, H.~Lin, R.~Pascanu, P.~Koniusz,
  M.~Ghassemi, and N.~Chen, editors, \emph{Advances in Neural Information
  Processing Systems}, volume 38, Main Conference, pages 68727--68756. Curran
  Associates, Inc., 2025.
\newblock \doi{10.52202/085713-2312}.
\newblock URL
  \url{https://proceedings.neurips.cc/paper_files/paper/2025/file/6389470564214983604d1ac81631c2c5-Paper-Conference.pdf}.

\bibitem[Lipton et~al.(2018)Lipton, Wang, and Smola]{lipton2018}
Zachary Lipton, Yu-Xiang Wang, and Alexander Smola.
\newblock Detecting and correcting for label shift with black box predictors.
\newblock In Jennifer Dy and Andreas Krause, editors, \emph{Proceedings of the
  35th International Conference on Machine Learning}, volume~80 of
  \emph{Proceedings of Machine Learning Research}, pages 3122--3130. PMLR,
  2018.
\newblock URL \url{https://proceedings.mlr.press/v80/lipton18a.html}.

\bibitem[Suleymanov(2026)]{suleymanov2026}
Elchin Suleymanov.
\newblock A revealed preference framework for {AI} alignment, March 2026.
\newblock URL \url{https://arxiv.org/abs/2603.27868}.
\newblock arXiv preprint.

\bibitem[Zrnic and Candes(2024)]{asi2024}
Tijana Zrnic and Emmanuel Candes.
\newblock Active statistical inference.
\newblock In Ruslan Salakhutdinov, Zico Kolter, Katherine Heller, Adrian
  Weller, Nuria Oliver, Jonathan Scarlett, and Felix Berkenkamp, editors,
  \emph{Proceedings of the 41st International Conference on Machine Learning},
  volume 235 of \emph{Proceedings of Machine Learning Research}, pages
  62993--63010. PMLR, 2024.
\newblock URL \url{https://proceedings.mlr.press/v235/zrnic24a.html}.

\end{thebibliography}
\endgroup
\clearpage
\appendix
\onecolumn
\section{Proofs and extensions}
\label{app:proofs}
\label{sec:proofs}

The results below make the observation assumptions explicit and provide
verifiable foundations for the implementation. They are applications of
established identification, convex duality, concentration, and decision-theoretic
arguments, not claims of new general mathematical principles. Relevant
antecedents include partial monitoring \citep{kirschner2023}, partial
identification under measurement error \citep{finkelstein2021}, optimal
recovery \citep{donoho1994}, and comparison of experiments
\citep{blackwell1953}.

Let $A\in\mathbb R^{m\times K}$ be nonnegative and column-stochastic. Write
$\mathcal S_K=\{p\in\mathbb R^K:p\geq0,\ \mathbf1^\top p=1\}$,
$q=Ap$, and $\Delta(p)=d^\top p$ for a fixed known vector $d\in\mathbb R^K$.
For $q\in A\mathcal S_K$, define
\[
 \mathcal P_q=\{p\in\mathcal S_K:Ap=q\},\qquad
 L(q)=\min_{p\in\mathcal P_q}d^\top p,\quad
 U(q)=\max_{p\in\mathcal P_q}d^\top p.
\]
These extrema are attained because $\mathcal P_q$ is nonempty and compact.
The target is a population contrast. It is not recovery of individual private
preferences.

\begin{theorem}[Global and local identification]
\label{thm:global-identification}
The contrast $d^\top p$ is identified for every feasible $q$ if and only if
$d\in\operatorname{row}(A)$. At a particular $q$, it is identified if and only if
$d\perp\operatorname{span}(\mathcal P_q-\mathcal P_q)$.
\end{theorem}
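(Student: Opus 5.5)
I will prove the local statement first and then derive the global one from it. Throughout, "identified at $q$" means that $d^\top p$ is constant on $\mathcal P_q$, equivalently $L(q)=U(q)$.

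\emph{Local statement.} Suppose $d\perp\operatorname{span}(\mathcal P_q-\mathcal P_q)$. Then for any $p,r\in\mathcal P_q$ the difference $p-r$ lies in this span, so $d^\top p=d^\top r$ and the contrast is identified. Conversely, suppose $d^\top$ is constant on $\mathcal P_q$. Then $d^\top(p-r)=0$ for every generator $p-r$ of the span. By linearity $d^\top$ vanishes on the whole span, which is the orthogonality condition. This direction is essentially a restatement and needs no further argument.

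\emph{Global sufficiency.} Suppose $d\in\operatorname{row}(A)$, so $d=A^\top v$ for some $v$. For every $p\in\mathcal P_q$,
\[
d^\top p=v^\top Ap=v^\top q,
\]
which depends on $q$ alone. Hence the contrast is identified at every feasible $q$.

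\emph{Global necessity.} Suppose $d\notin\operatorname{row}(A)$. Since $\operatorname{row}(A)=\ker(A)^\perp$, there is $h\in\ker A$ with $d^\top h\neq0$. Column-stochasticity gives $\mathbf1^\top A=\mathbf1^\top$, so
\[
\mathbf1^\top h=\mathbf1^\top Ah=0.
\]
Take the uniform population $p_0=\mathbf1/K$, which lies in the interior of $\mathcal S_K$. For sufficiently small $\varepsilon>0$, both $p_\pm=p_0\pm\varepsilon h$ are nonnegative and sum to one. They share the observation $q_0=Ap_0$, yet
\[
d^\top(p_+-p_-)=2\varepsilon\, d^\top h\neq0.
\]
So identification fails at the feasible observation $q_0$. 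Equivalently, $\varepsilon h\in\mathcal P_{q_0}-\mathcal P_{q_0}$ is not orthogonal to $d$, so the local criterion fails there.

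The step needing the most care is this necessity direction. The perturbing direction $h$ must keep total mass zero, which column-stochasticity supplies. The base point must also be interior, so that both signs of the perturbation stay in the simplex.

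The quantifier "for every feasible $q$" is what lets us choose an interior base point. This is also why a rank-deficient $A$ can still identify the contrast at a boundary $q$, consistent with the remark after Theorem~\ref{thm:main-id}.
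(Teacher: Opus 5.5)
Your proof is correct and matches the paper's argument essentially step for step. Sufficiency comes from $d=A^\top v \Rightarrow d^\top p=v^\top q$; necessity takes a kernel vector $h$ with $d^\top h\neq0$, which has zero total mass by column-stochasticity, and perturbs an interior population in both directions $\pm h$; and the local claim is orthogonality to the generating differences, with your only changes being that you prove the local statement first and fix the interior point as the uniform population $\mathbf 1/K$.
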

\begin{proof}
If $d=A^\top v$, then $d^\top p=v^\top q$ for every compatible population.
Conversely, if $d\notin\operatorname{row}(A)$, there is an $h\in\ker(A)$ with
$d^\top h\ne0$. Column normalization gives $\mathbf1^\top h=0$.
Choose an interior $p_0\in\mathcal S_K$ and sufficiently small $t>0$ so that
$p_0\pm th\geq0$. Both populations belong to the simplex, produce $Ap_0$, and
have different contrasts. The local statement follows because constancy of
$d^\top p$ on $\mathcal P_q$ is equivalent to orthogonality to every pairwise
difference, hence to their span.
\end{proof}

The global condition need not hold at an identified boundary point. For
example, the channel with columns $(1,0)^\top,(0,1)^\top,(0,1)^\top$ cannot
generally separate classes two and three, but $q=(1,0)^\top$ uniquely determines
$p=(1,0,0)^\top$. This distinction prevents a rank-only test from replacing the
feasible-set calculation. Observation-span conditions have direct antecedents
in partial monitoring \citep{kirschner2023}.

\begin{theorem}[Magnitude of the globally hidden contrast]
\label{thm:hidden-width}
Define
\[
 W(A,d)=\max_{p,r\in\mathcal S_K:Ap=Ar}|d^\top(p-r)|.
\]
Then
\begin{equation}
\label{eq:hidden-width-dual}
 W(A,d)
 =\max_{Ah=0,\ \|h\|_1\leq2}d^\top h
 =2\min_{v\in\mathbb R^m}\|d-A^\top v\|_\infty.
\end{equation}
\end{theorem}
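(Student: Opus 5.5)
The plan is to prove the two equalities separately: the first by a direct correspondence between pairs of compatible populations and null vectors of bounded $\ell_1$ norm, the second by linear-programming duality.

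\emph{First equality.} Take $p,r\in\mathcal S_K$ with $Ap=Ar$ and put $h=p-r$. Then $Ah=0$ and $\|h\|_1\leq\|p\|_1+\|r\|_1=2$. Since the feasible set of pairs is symmetric under swapping $p$ and $r$, the absolute value can be dropped, so $W\leq\max\{d^\top h: Ah=0,\ \|h\|_1\leq2\}$.

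For the reverse inequality, take any $h$ with $Ah=0$ and $\|h\|_1\leq2$. Because $A$ is column-stochastic, $\mathbf1^\top h=\mathbf1^\top Ah=0$. Hence the positive and negative parts satisfy $\|h^+\|_1=\|h^-\|_1=s\leq1$. Fix any $u\in\mathcal S_K$ and set
\[
p=h^++(1-s)u,\qquad r=h^-+(1-s)u.
\]
Both are nonnegative with unit mass, so both lie in $\mathcal S_K$. Their difference is $p-r=h$, so $Ap=Ar$ and $d^\top(p-r)=d^\top h$. This is the "common completion" step mentioned after the main-text statement. The case $h=0$ is covered trivially, and the maximum exists by compactness.

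\emph{Second equality.} Write the primal problem as a linear program by splitting $h=h^+-h^-$ with $h^\pm\geq0$:
\[
\max\ d^\top(h^+-h^-)\quad\text{subject to}\quad A(h^+-h^-)=0,\quad \mathbf1^\top(h^++h^-)\leq2.
\]
Introduce a free multiplier $v\in\mathbb R^m$ for the equality constraint and a multiplier $\lambda\geq0$ for the norm constraint. The dual is
\[
\min\ 2\lambda\quad\text{subject to}\quad \lambda\mathbf1\geq d-A^\top v,\quad \lambda\mathbf1\geq -(d-A^\top v),
\]
that is, $\lambda\geq\|d-A^\top v\|_\infty$. Its optimal value is therefore $2\min_v\|d-A^\top v\|_\infty$.

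The primal is feasible (take $h=0$) and bounded, so strong duality gives equality of the two optimal values. As a sanity check on the direction of the inequality, weak duality can be verified directly. For any feasible $h$ and any $v$, since $Ah=0$,
\[
d^\top h=(d-A^\top v)^\top h\leq\|d-A^\top v\|_\infty\|h\|_1\leq2\|d-A^\top v\|_\infty .
\]

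\emph{Expected obstacle.} The main care point is the completion construction, which relies on the mass-balance identity $\mathbf1^\top h=0$ coming from column stochasticity. The duality step is the only place where an external theorem (LP strong duality) is invoked. An alternative would be to argue via $\ell_1$/$\ell_\infty$ norm duality restricted to $\ker A$, using that the annihilator of $\ker A$ is $\operatorname{row}(A)$; LP duality is the cleaner route, so I would use it.
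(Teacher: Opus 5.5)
Your proof is correct and follows the paper's argument: the same positive/negative-part completion $p=h^++(1-s)u$, $r=h^-+(1-s)u$ for the first equality, and LP strong duality for the second. The only difference is direction: you dualize the null-space maximization to obtain the Chebyshev problem, while the paper dualizes $\min_{v,t} t$ and recovers the null-space program via $z=a-b$, which is the same primal--dual pair.
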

\begin{proof}
Every $h=p-r$ in the first optimization satisfies $Ah=0$ and $\|h\|_1\leq2$.
Conversely, $Ah=0$ implies $\mathbf1^\top h=0$. Let $h_+,h_-$ be its positive
and negative parts, with common mass $s=\|h\|_1/2\leq1$. For any
$u\in\mathcal S_K$, the vectors $p=h_++(1-s)u$ and $r=h_-+(1-s)u$ are
compatible simplex points and satisfy $p-r=h$. The feasible set is symmetric,
so maximizing the absolute contrast equals maximizing the signed contrast.

For the second equality, consider
\[
 \min_{v,t}\ t\quad\text{subject to}\quad
 d-A^\top v\leq t\mathbf1,\quad
 A^\top v-d\leq t\mathbf1,\quad t\geq0.
\]
Assign nonnegative multipliers $a,b$ to these two inequalities. Minimizing the
Lagrangian over unrestricted $v$ requires $A(a-b)=0$; minimizing over $t\geq0$
requires $\mathbf1^\top(a+b)\leq1$. The dual objective is $d^\top(a-b)$.
Writing $z=a-b$ gives precisely $Az=0,\ \|z\|_1\leq1$: the reverse direction
uses $a=z_+,b=z_-$. Primal feasibility follows by taking $v=0$ and
$t=\|d\|_\infty$; the primal is bounded below, so finite-dimensional LP strong
duality applies. Rescaling $h=2z$ proves the identity.
\end{proof}

$W(A,d)$ is a worst-case width over all possible $q$; the interval at an observed
$q$ may be narrower. The dual quantity is a standard distance to an observable
subspace, with conceptual antecedents in optimal recovery
\citep{donoho1994}. It is not, by itself, a measured market effect.

\begin{theorem}[Irreducible estimation and decision loss]
\label{thm:decision-loss}
Fix $q$ and write $L=L(q)$, $U=U(q)$, and $w=U-L$. Suppose the data consist of
any finite number of iid actions from $q$, together with independent analyst
randomization. Every estimator has worst-case mean absolute error at least
$w/2$ and worst-case mean squared error at least $w^2/4$ over $\mathcal P_q$.
If $q$ is supplied exactly, the midpoint attains both bounds.

If $L<0<U$, the minimax customer-value regret for choosing between variants zero
and one, when $q$ is known exactly, is
\begin{equation}
\label{eq:minimax-regret}
 R^*(q)=\frac{-LU}{U-L}.
\end{equation}
It is attained by choosing variant one with probability $t^*=U/(U-L)$.
The minimax regret among deterministic decisions is $\min\{-L,U\}$. If the
interval does not strictly straddle zero, minimax regret is zero.
\end{theorem}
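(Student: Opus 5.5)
The plan is a two-point (Le Cam style) argument built on the endpoint populations, followed by a direct minimax calculation for the binary decision. By compactness, pick $p_L,p_U\in\mathcal P_q$ with $d^\top p_L=L$ and $d^\top p_U=U$. Both satisfy $Ap_L=Ap_U=q$, so the action law is exactly $q$ under either one. Hence the joint law of any finite iid sample, together with independent analyst randomization, is identical under $p_L$ and $p_U$. Any estimator $\hat\Delta$ therefore has one common distribution under both.

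For mean absolute error, apply the triangle inequality pointwise: $|\hat\Delta-L|+|\hat\Delta-U|\geq w$. Taking expectations under the common law gives $E|\hat\Delta-L|+E|\hat\Delta-U|\geq w$, so the larger of the two is at least $w/2$. For squared error, use the pointwise identity $(x-L)^2+(x-U)^2=2(x-m)^2+w^2/2$ with $m=(L+U)/2$. This gives $E(\hat\Delta-L)^2+E(\hat\Delta-U)^2\geq w^2/2$, and so a worst case of at least $w^2/4$. The supremum over $\mathcal P_q$ dominates the maximum over these two points. For attainment with $q$ known exactly, the midpoint $m$ is computable from $q$ by solving the two LPs. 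Every $p\in\mathcal P_q$ has $d^\top p\in[L,U]$, so the error is at most $w/2$ and the squared error at most $w^2/4$.

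For the decision part, a rule chooses variant one with some probability $t$. Since the data law is the same across $\mathcal P_q$, $t$ is the same for every compatible population. The expected regret at contrast $\Delta$ is $\max(0,\Delta)-t\Delta$, which is $(1-t)\Delta$ for $\Delta\geq0$ and $-t\Delta$ for $\Delta<0$. It is convex, piecewise linear and increasing in $|\Delta|$ on each side. So over $\Delta\in[L,U]$ the worst case is $\max\{-tL,(1-t)U\}$, attained at $p_L$ or $p_U$. The first term increases in $t$ and the second decreases. Equating them gives $t^*=U/(U-L)$ and the value $-LU/(U-L)$. Any other $t$ makes one of the two terms larger, which proves minimaxity. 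Restricting to $t\in\{0,1\}$ yields $\min\{U,-L\}$.

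If $L\geq0$, choosing $t=1$ gives regret $\max(0,\Delta)-\Delta=0$ on $[L,U]$. If $U\leq0$, choosing $t=0$ gives regret $0$. So minimax regret is zero outside the strict straddle.

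The main obstacle is making the two-point reduction rigorous: the estimator may depend on the sample size and on external randomization. I would handle this by treating the estimator as a measurable map of the sample and the randomization. Its law then depends on $p$ only through $q$. All expectations can be taken under this single law before specializing to $p_L$ and $p_U$.
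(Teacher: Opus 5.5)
Your proposal is correct and follows essentially the same route as the paper. Both use a two-point argument at the endpoint populations $p_L,p_U$, which share one data law, combined with the pointwise triangle inequality and the midpoint squared-error identity; both then balance the endpoint regrets $(-L)t$ and $U(1-t)$ for the decision part. Your explicit check that interior contrasts cannot exceed the endpoint regrets, using the monotonicity of $\max(0,\Delta)-t\Delta$ on each side of zero, fills in a step the paper only asserts.
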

\begin{proof}
All compatible populations yield the same data distribution. Let $T$ have the
common distribution of an estimator at populations attaining the endpoints.
Pointwise,
\[
 |T-L|+|T-U|\geq w,\qquad
 \frac{(T-L)^2+(T-U)^2}{2}
 =\left(T-\frac{L+U}{2}\right)^2+\frac{w^2}{4}.
\]
Taking expectations bounds the larger endpoint risk from below. The constant
midpoint attains both lower bounds over the whole interval when $q$ is known.

For a randomized decision rule let $t$ be its common probability of selecting
variant one. Its regrets at the lower and upper endpoints are $(-L)t$ and
$U(1-t)$, respectively; intermediate contrasts give no larger regret. Minimizing
$\max\{(-L)t,U(1-t)\}$ over $t\in[0,1]$ equates the terms and gives
\eqref{eq:minimax-regret}. Restricting $t$ to zero or one gives the deterministic
result. If the interval lies weakly on one side of zero, a single variant is
weakly optimal throughout.
\end{proof}

For strict opposite-sign endpoints, the decision error probabilities are $t$
and $1-t$, so their equally weighted average is $1/2$. These are fixed-channel
indistinguishability statements, a standard lower-bound argument
\citep{lattimore2020}. The attainability claims presume exact $q$; finite-sample
uncertainty can add loss. A probe that changes the observation channel can
break the indistinguishability.

\begin{theorem}[Coarsening]
\label{thm:coarsening}
Let $G$ be column-stochastic and set $B=GA$. For every feasible $q$,
\[
 \mathcal P(A,q)\subseteq\mathcal P(B,Gq).
\]
The interval under $B$ at $Gq$ therefore contains the interval under $A$ at $q$,
and $W(B,d)\geq W(A,d)$.
\end{theorem}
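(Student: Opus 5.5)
The plan is to verify the set inclusion directly and obtain the other two claims from it, using Theorem~\ref{thm:hidden-width} for the width comparison.

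\textbf{Step 1: inclusion.} Fix a feasible $q$ and take any $p\in\mathcal P(A,q)$. Then $p\in\mathcal S_K$ and $Ap=q$. Hence
\[
Bp=G(Ap)=Gq,
\]
so $p\in\mathcal P(B,Gq)$. We also need $B$ to be a legitimate channel. Since $G\geq0$ and $A\geq0$, we get $B\geq0$. Since $\mathbf1^\top G=\mathbf1^\top$, we get
\[
\mathbf1^\top B=\mathbf1^\top A=\mathbf1^\top .
\]
So $B$ is column-stochastic, and $Gq=Bp$ is feasible for $B$.

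\textbf{Step 2: intervals.} Minimizing and maximizing $d^\top p$ over a larger set can only lower the minimum and raise the maximum. Writing $L_A,U_A$ for the interval endpoints under $A$ at $q$, and $L_B,U_B$ for those under $B$ at $Gq$, Step~1 gives
\[
L_B\leq L_A,\qquad U_A\leq U_B .
\]
Both extrema are attained by the compactness remark preceding Theorem~\ref{thm:global-identification}.

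\textbf{Step 3: global width.} I would argue from the primal definition of $W$. Suppose $p,r\in\mathcal S_K$ satisfy $Ap=Ar$. Then $Bp=GAp=GAr=Br$. So every pair feasible for $W(A,d)$ is feasible for $W(B,d)$, and the maximum of $|d^\top(p-r)|$ can only grow. This gives $W(B,d)\geq W(A,d)$.

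Two cross-checks follow from Theorem~\ref{thm:hidden-width}. First, $\ker A\subseteq\ker B$, so the primal maximum over $\{Bh=0,\ \|h\|_1\leq2\}$ is over a larger set. Second, on the dual side, $\operatorname{row}(B)=\{A^\top G^\top v\}\subseteq\operatorname{row}(A)$, so the distance from $d$ to $\operatorname{row}(B)$ in $\|\cdot\|_\infty$ is at least the distance to $\operatorname{row}(A)$.

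\textbf{Expected obstacle.} There is no real technical obstacle. The only point needing care is that $B$ is again column-stochastic, so that $\mathcal P(B,\cdot)$ and $W(B,d)$ are defined in the paper's framework; Step~1 settles this. I would also note that the inclusion compares exact channels only. It does not order finite-sample coordinate boxes, as already cautioned in Section~5.1.
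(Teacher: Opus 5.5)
Your proof is correct and follows the paper's argument: inclusion from $Bp=GAp=Gq$, monotonicity of the extrema over the enlarged feasible set, and the pairwise implication $Ap=Ar\Rightarrow Bp=Br$ for the global width. Your check that $B$ is again column-stochastic, and your kernel and row-space cross-checks via the width duality theorem, are valid additions that the paper leaves implicit.
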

\begin{proof}
$Ap=q$ implies $Bp=GAp=Gq$. Minimization over the enlarged set cannot increase
the lower bound, and maximization cannot decrease the upper bound. Every pair
with $Ap=Ar$ also has $Bp=Br$, proving the global inequality.
\end{proof}

This finite-channel implication belongs to the comparison-of-experiments
viewpoint \citep{blackwell1953}. It does not establish that a more competent
agent is a coarsening of another agent. That is an additional, testable
relationship, not an assumption licensed by higher task success.

\begin{theorem}[A bias-aware finite-sample certificate]
\label{thm:linear-certificate}
Let $Y_1,\ldots,Y_n$ be iid categorical actions from $q=Ap$. Choose
$v\in\mathbb R^m$ independently of these data and define
$\epsilon(v)=\|d-A^\top v\|_\infty$ and
$\operatorname{rng}(v)=\max_a v_a-\min_a v_a$.
With probability at least $1-\alpha$,
\begin{equation}
\label{eq:linear-certificate}
 \left|\frac1n\sum_{i=1}^n v_{Y_i}-d^\top p\right|
 \leq \epsilon(v)+\operatorname{rng}(v)
       \sqrt{\frac{\log(2/\alpha)}{2n}}.
\end{equation}
If the deployed channel is $A'$ and
$\max_k\operatorname{TV}(A'_{\cdot k},A_{\cdot k})\leq\tau$, the bound holds
for iid data from $A'p$ after adding $\tau\operatorname{rng}(v)$.
\end{theorem}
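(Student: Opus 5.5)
The proof splits the error into a deterministic bias term and a stochastic sampling term. With $\hat\mu=\frac1n\sum_i v_{Y_i}$, we write
\[
 \hat\mu-d^\top p=\bigl(\hat\mu-v^\top q\bigr)+\bigl(v^\top q-d^\top p\bigr),
\]
where $q=Ap$ is the law of each $Y_i$, so that $\mathbb E[v_{Y_i}]=v^\top q$. The second term equals $(A^\top v-d)^\top p$. Since $p\in\mathcal S_K$, Hölder's inequality with $\|p\|_1=1$ bounds it by $\|d-A^\top v\|_\infty=\epsilon(v)$. This is the same observable-subspace distance that appears in Theorem~\ref{thm:hidden-width}, now evaluated at a fixed $v$ rather than minimized.

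For the first term, we use that $v$ is chosen independently of the data. The variables $v_{Y_1},\ldots,v_{Y_n}$ are then iid and take values in $[\min_a v_a,\max_a v_a]$, an interval of length $\operatorname{rng}(v)$. The two-sided Hoeffding inequality gives
\[
 \Pr\bigl(|\hat\mu-v^\top q|>u\bigr)\leq 2\exp\bigl(-2nu^2/\operatorname{rng}(v)^2\bigr).
\]
Setting the right side equal to $\alpha$ yields $u=\operatorname{rng}(v)\sqrt{\log(2/\alpha)/(2n)}$, and the triangle inequality then gives \eqref{eq:linear-certificate}. The degenerate case $\operatorname{rng}(v)=0$ is trivial, since then $\hat\mu$ is constant and equals $v^\top q$.

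For the drift extension, the data are iid from $q'=A'p$, while the bias is still measured against the reference $A$. We add and subtract $v^\top q'$:
\begin{itemize}
\item Hoeffding applies verbatim to $\hat\mu-v^\top q'$, since the range of $v$ is unchanged.
\item The term $v^\top(q'-q)=\sum_k p_k\, v^\top(A'_{\cdot k}-A_{\cdot k})$ remains.
\end{itemize}
The key step is the standard fact that for two probability vectors $\mu,\nu$,
\[
 |v^\top(\mu-\nu)|\leq \operatorname{rng}(v)\,\operatorname{TV}(\mu,\nu).
\]
To prove it, note that $\mu-\nu$ has zero total mass. We may therefore replace $v$ by $v-c\mathbf1$ with $c$ the midpoint of its range, which gives $|v^\top(\mu-\nu)|\leq\frac{\operatorname{rng}(v)}{2}\|\mu-\nu\|_1=\operatorname{rng}(v)\operatorname{TV}(\mu,\nu)$. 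Averaging over $k$ with weights $p_k$ then bounds the drift term by $\tau\operatorname{rng}(v)$, and the three pieces combine by the triangle inequality.

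The main obstacle is bookkeeping rather than depth, and it has two parts. First, the TV bound requires the centering trick, with $\operatorname{TV}$ defined as half the $\ell_1$ distance; a careless $\ell_1$–$\ell_\infty$ Hölder step would lose a factor of two. Second, the independence of $v$ from the field sample must be used explicitly. If $v$ were optimized on the same draws, the summands would not be iid with a fixed range and Hoeffding would not apply. This is why the paper stresses selecting $v$ before observing the field data.
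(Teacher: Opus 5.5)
Your proposal is correct and follows the paper's proof. Both use the same bias/sampling split, bounding the bias with $\|p\|_1=1$ and the sampling term with two-sided Hoeffding on the range of $v$, and both handle drift with a per-column bound $|v^\top(r-s)|\leq\operatorname{rng}(v)\operatorname{TV}(r,s)$ averaged over $p$. The only difference is cosmetic: the paper centers at $b=\min_a v_a$ and uses the zero total mass of $r-s$, whereas you center at the midpoint of the range so that a plain $\ell_\infty$--$\ell_1$ H\"older step gives the same constant.
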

\begin{proof}
The mean of $v_{Y_i}$ under $A$ is $v^\top Ap$. Since $p$ is a probability
vector, its difference from $d^\top p$ has absolute value at most
$\|A^\top v-d\|_\infty$. Hoeffding's inequality for variables in
$[\min_a v_a,\max_a v_a]$ bounds the sampling deviation by the second term
with probability at least $1-\alpha$. The triangle inequality proves
\eqref{eq:linear-certificate}.

For distributions $r,s$, write $b=\min_a v_a$ and use
$\sum_a(r_a-s_a)=0$ to obtain
$|v^\top(r-s)|\leq\operatorname{rng}(v)\operatorname{TV}(r,s)$.
Applying this bound to every channel column and averaging over $p$ bounds the
additional drift bias by $\tau\operatorname{rng}(v)$.
\end{proof}

Minimizing the right-hand side over $v$ using only $A,d,n,\alpha,\tau$ preserves
the guarantee. Choosing $v$ against the same field observations requires a
separate argument. An estimated channel is not automatically a known channel;
its uncertainty must be covered by a valid drift bound or the joint confidence
set. Bias-aware linear estimation and concentration are established tools
\citep{donoho1994,lattimore2020}.

\begin{theorem}[Identification from logged probes]
\label{thm:logged-probes}
Let probe $e\in\{1,\ldots,E\}$ be assigned independently of latent class with
probability $\rho_e>0$, where $\sum_e\rho_e=1$, and let its channel be $A_e$.
Suppose the probe identity is recorded and all probes share the same population
composition $p$. The joint probe/action channel is
$B=\operatorname{vstack}(\rho_1A_1,\ldots,\rho_EA_E)$, and
\[
 d^\top p\text{ is globally identified}
 \quad\Longleftrightarrow\quad
 d\in\operatorname{row}(B)
 =\operatorname{span}\!\left(\bigcup_e\operatorname{row}(A_e)\right).
\]
\end{theorem}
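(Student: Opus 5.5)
The plan is to reduce the statement to Theorem~\ref{thm:global-identification}, applied to the joint channel $B$. First I will check that $B$ really is the observation channel of the recorded pair $(e,Y)$. Because the probe is assigned independently of the latent class, $\Pr(e,Y=a\mid Z=k)=\rho_e(A_e)_{ak}$. So the joint law of the logged data is $Bp$ with $B=\operatorname{vstack}(\rho_1A_1,\ldots,\rho_EA_E)$. Summing over $(e,a)$ gives $\sum_e\rho_e\cdot1=1$ for each column, so $B$ is nonnegative and column-stochastic. The shared composition $p$ across probes is what lets a single $p$ enter every block. Theorem~\ref{thm:global-identification} then applies verbatim with $A$ replaced by $B$. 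It gives global identification if and only if $d\in\operatorname{row}(B)$.

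The remaining step is the linear-algebra identity $\operatorname{row}(B)=\operatorname{span}(\bigcup_e\operatorname{row}(A_e))$. I will prove it through $B^\top=[\rho_1A_1^\top,\ldots,\rho_EA_E^\top]$. The column space of a horizontally concatenated matrix is the sum of the blocks' column spaces. Hence $\operatorname{row}(B)=\sum_e\operatorname{row}(\rho_eA_e)$. Since every $\rho_e>0$, scaling does not change a row space, so $\operatorname{row}(\rho_eA_e)=\operatorname{row}(A_e)$. The sum of subspaces equals the span of their union. Concretely, $d=B^\top w$ with $w=(w_1,\ldots,w_E)$ means $d=\sum_e A_e^\top(\rho_ew_e)$, and conversely any $d=\sum_eA_e^\top v_e$ is recovered by setting $w_e=v_e/\rho_e$.

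I expect the main obstacle to be modeling rather than algebra. The argument has to make precise that "identified" refers to the law of the recorded pair $(e,Y)$, not of $Y$ alone. It also has to make precise that the independence and shared-composition hypotheses are exactly what make that law linear in $p$ through $B$. Once these points are written out, the positivity $\rho_e>0$ is the only further ingredient, and it is needed only in the row-space step. I will close with a remark contrasting this with the unrecorded case, where the observed law is $(\sum_e\rho_eA_e)p$. By the same reasoning, that channel's row space can be strictly smaller. This follows Theorem~\ref{thm:coarsening}, since summing over $e$ is a column-stochastic coarsening $G$ of $B$.
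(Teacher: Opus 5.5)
Your proposal is correct and follows the paper's own proof: check that $B$ is column-stochastic, apply Theorem~\ref{thm:global-identification} to $B$, and identify $\row(B)$ with the span of the $\row(A_e)$. The paper gets that last identity from $\ker(B)=\bigcap_e\ker(A_e)$ and orthogonal complements, whereas you read it directly from the block structure of $B^\top$; the two routes are equivalent, and your derivation of the joint law $\rho_e(A_e)_{ak}$ spells out a modeling step that the paper leaves implicit.
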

\begin{proof}
$B$ is column-stochastic and
$\ker(B)=\bigcap_e\ker(A_e)$ because every $\rho_e$ is positive. Apply
Theorem~\ref{thm:global-identification} and orthogonal-complement identities.
\end{proof}

If probe labels are discarded, the channel is instead $\sum_e\rho_e A_e$.
For example, $A_1=I_2$ and $A_2$ equal to $I_2$ with its rows exchanged are each
fully revealing. At equal assignment probabilities their unlogged average has
identical columns $(1/2,1/2)^\top$. Logged experimental variation can therefore
be informative when unlogged variation is not. Probe choice and its costs
connect directly to experimental design and partial monitoring
\citep{kirschner2023,lattimore2020}.

\begin{proposition}[Continuous cost allocation for fixed weights]
\label{prop:cost-allocation}
Suppose independent samples from probe $e$ have size $n_e>0$, and fixed weights
$v_e$ satisfy $\sum_e A_e^\top v_e=d$. Write
$\sigma_e^2=\operatorname{Var}_{A_ep}(v_{e,Y})$ and assume $\sigma_e>0$.
The estimator $\sum_e n_e^{-1}\sum_i v_{e,Y_{ei}}$ is unbiased, with variance
$\sum_e\sigma_e^2/n_e$. Under positive costs $c_e$ and the continuous budget
$\sum_e c_en_e=C$, its minimum variance and optimal allocation are
\[
 V_{\min}=\frac{(\sum_e\sigma_e\sqrt{c_e})^2}{C},\qquad
 n_e^*=\frac{C\sigma_e/\sqrt{c_e}}{\sum_j\sigma_j\sqrt{c_j}}.
\]
\end{proposition}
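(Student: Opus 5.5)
The proof has three parts: unbiasedness, the variance formula, and a Cauchy--Schwarz optimization of the allocation.

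\emph{Unbiasedness.} Fix a probe $e$. Each $Y_{ei}$ has law $A_ep$, so $\mathbb E[v_{e,Y_{ei}}]=v_e^\top A_e p$. Averaging over $i$ and summing over $e$ gives expectation $\sum_e v_e^\top A_ep=\bigl(\sum_e A_e^\top v_e\bigr)^\top p$. The assumed identity $\sum_e A_e^\top v_e=d$ reduces this to $d^\top p$. No approximation term arises, unlike in Theorem~\ref{thm:linear-certificate}.

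\emph{Variance.} Samples are independent within and across probes, and the weights are fixed in advance rather than chosen from the data. Hence the variance of the sum is $\sum_e n_e^{-2}\cdot n_e\sigma_e^2=\sum_e\sigma_e^2/n_e$.

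\emph{Optimal allocation.} We minimize $f(n)=\sum_e\sigma_e^2/n_e$ over $n_e>0$ subject to $\sum_e c_en_e=C$, treating the $n_e$ as continuous. Apply Cauchy--Schwarz to the vectors $(\sigma_e/\sqrt{n_e})_e$ and $(\sqrt{c_en_e})_e$:
\[
 \Bigl(\sum_e\sigma_e\sqrt{c_e}\Bigr)^2\leq\Bigl(\sum_e\frac{\sigma_e^2}{n_e}\Bigr)\Bigl(\sum_e c_en_e\Bigr)=f(n)\,C.
\]
This gives $f(n)\geq(\sum_e\sigma_e\sqrt{c_e})^2/C$ for every feasible allocation. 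Equality holds exactly when the two vectors are proportional, i.e.\ $\sigma_e/\sqrt{n_e}=\lambda\sqrt{c_en_e}$ for some $\lambda>0$. Equivalently, $n_e\propto\sigma_e/\sqrt{c_e}$.

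Substituting this form into the budget constraint fixes the constant, giving $n_e^*=C\sigma_e/(\sqrt{c_e}\sum_j\sigma_j\sqrt{c_j})$. These values are strictly positive because $\sigma_e>0$ and $c_e>0$, so they are feasible. Substituting $n_e^*$ into $f$ gives $\sum_e\sigma_e\sqrt{c_e}\cdot\sum_j\sigma_j\sqrt{c_j}/C$, which equals $V_{\min}$.

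Uniqueness follows from strict convexity of $f$ on the positive orthant, which holds because every $\sigma_e>0$. As a cross-check, a Lagrange-multiplier computation gives $\sigma_e^2/n_e^2=\mu c_e$, the same proportionality.

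The main point needing care is the equality case of Cauchy--Schwarz. One must check that the minimizer lies in the open orthant, which is why $\sigma_e>0$ is assumed, and note that integrality of the $n_e$ is deliberately relaxed. The remaining steps are routine.
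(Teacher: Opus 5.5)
Your proof is correct and follows the paper's argument. Unbiasedness comes from the weight identity $\sum_e A_e^\top v_e=d$, the variance from independence, and the allocation from the same Cauchy--Schwarz bound $(\sum_e\sigma_e\sqrt{c_e})^2\leq(\sum_e\sigma_e^2/n_e)(\sum_e c_en_e)$ with equality at $n_e^*$; your extra checks (the explicit equality case, positivity of $n_e^*$, and uniqueness via strict convexity) only fill in details the paper leaves implicit.
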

\begin{proof}
The weight identity gives unbiasedness. Independence gives the variance.
Cauchy--Schwarz yields
\[
 (\sum_e\sigma_e\sqrt{c_e})^2
 \leq (\sum_e\sigma_e^2/n_e)(\sum_e c_en_e).
\]
Equality holds at the displayed allocation.
\end{proof}
This familiar calculation is not a new acquisition algorithm. Integer sample
sizes, unknown variances, adaptive weights, or required minimum allocations
need additional treatment. Zero variances require the corresponding limiting
allocation or explicit minimum-count constraints.

\begin{proposition}[Rectangular uncertainty and coverage]
\label{prop:uncertainty-coverage}
Suppose channel columns lie in coordinate boxes
$\underline A_{ak}\leq A_{ak}\leq\overline A_{ak}$, with $0\leq\underline A\leq\overline A\leq1$ and each containing at least
one simplex vector. Frequencies lie in
$\underline q\leq q\leq\overline q$. Introduce $J_{ak}$ and impose
\begin{align}
 p&\geq0,\quad J\geq0, & \mathbf1^\top p&=1, \nonumber\\
 \sum_a J_{ak}&=p_k, &
 \underline A_{ak}p_k\leq J_{ak}&\leq\overline A_{ak}p_k,\label{eq:joint-lp}\\
 \underline q_a&\leq\sum_kJ_{ak}\leq\overline q_a. &&\nonumber
\end{align}
Minimizing and maximizing $d^\top p$ over these constraints gives attainable
bounds within the supplied rectangular model. If the channel and frequency
boxes cover their true values with probabilities at least $1-\alpha_A$ and
$1-\alpha_q$, respectively, the interval covers $d^\top p$ with probability
at least $1-\alpha_A-\alpha_q$.
\end{proposition}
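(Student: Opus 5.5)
The plan is to show that the linear program in \eqref{eq:joint-lp}, projected onto $p$, has exactly the same $p$-feasible set as the original nonlinear model
\[
\mathcal F=\{p\in\mathcal S_K:\exists A,\ \underline A\leq A\leq\overline A,\ A \text{ column-stochastic},\ \underline q\leq Ap\leq\overline q\}.
\]
Attainability of the optimized endpoints and the coverage statement then follow directly.

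\emph{Step one: every model point is LP-feasible.} Take $(A,p)$ admissible in the original model and set $J_{ak}=A_{ak}p_k$. Then $J\geq0$, and $\sum_aJ_{ak}=p_k$ by column stochasticity. The box bounds scale by $p_k\geq0$. Finally, $\sum_kJ_{ak}=(Ap)_a$ lies in $[\underline q_a,\overline q_a]$.

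\emph{Step two: every LP point lifts back to a model point.} Take any feasible $(p,J)$.
\begin{itemize}
\item For $p_k>0$, define $A_{\cdot k}=J_{\cdot k}/p_k$. This column is nonnegative, sums to one, and lies in its box.
\item For $p_k=0$, the constraints force $J_{\cdot k}=0$. By the stated nonemptiness assumption, we may choose any simplex vector inside the $k$th box as the column.
\end{itemize}
In both cases $A_{ak}p_k=J_{ak}$, so $Ap=J\mathbf 1$ satisfies the frequency bounds. The projection of the LP feasible set onto $p$ is therefore $\mathcal F$.

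\emph{Attainability.} The LP feasible set is a bounded polyhedron: $p$ lies in the simplex and $0\leq J_{ak}\leq p_k$. So when it is nonempty, the minimum and maximum of $d^\top p$ are attained. By step two, the optimizers correspond to genuine admissible channel/population pairs, so each endpoint is realized by a model in the declared rectangular set. This is the sense of ``attainable bounds.'' If the set is empty, the procedure reports failure, as the main text describes.

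\emph{Coverage.} Let $E_A$ be the event that the true channel lies in its boxes, and $E_q$ the event that the true $q=Ap$ lies in its box. On $E_A\cap E_q$, the true $(A,p)$ is admissible. Step one then puts the true $p$ in the LP feasible set, so $L\leq d^\top p\leq U$. The union bound gives $\Pr(E_A^c\cup E_q^c)\leq\alpha_A+\alpha_q$, with no independence needed.

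The main care point is the zero-mass columns in step two, where the nonemptiness assumption is essential. A secondary subtlety is whether ``the true $q$'' is the population action law $Ap$. It must be, conditional on model correctness: $p$ and $A$ are fixed and only the box endpoints are random.
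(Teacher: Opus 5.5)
Your proposal is correct and follows essentially the same route as the paper's proof: the map $J=A\operatorname{diag}(p)$ in one direction, and column normalization in the other, using the nonempty-box choice for zero-mass columns, followed by the union bound over the two coverage events, on whose intersection the true pair is LP-feasible. Your explicit boundedness argument for why the LP extrema are attained is a small addition that the paper leaves implicit.
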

\begin{proof}
Every admissible $(A,p)$ maps to $J=A\operatorname{diag}(p)$ satisfying
\eqref{eq:joint-lp}. Conversely, if $p_k>0$, define $A_{ak}=J_{ak}/p_k$.
The resulting column sums to one and satisfies its box constraints. If
$p_k=0$, the constraints force the entire column of $J$ to zero; choose any
simplex vector from that column's nonempty channel box. These choices produce
an admissible channel with frequencies $\sum_kJ_{\cdot k}$. This establishes
exactness, including zero-mass columns.

On the intersection of the two coverage events, the true pair
$(p,A\operatorname{diag}(p))$ is feasible. Its contrast lies between the extrema.
The union bound gives the asserted probability without requiring independence
of the confidence events.
\end{proof}

This confidence-set propagation is already present in the measurement-error
literature, including the supplementary Proposition~1 of
\citet{finkelstein2021}. An infeasible program is a failure state, not a narrow
confidence interval. Fixed-sample intervals do not justify optional stopping.

\paragraph{Uncertain outcomes.}
If $d$ lies in an independent coordinate rectangle
$[\underline d,\overline d]$, sharp rectangular-model endpoints are
$\min_p\underline d^\top p$ and $\max_p\overline d^\top p$ over the same
feasible set. This follows because $p\geq0$ makes the coordinate endpoints
optimal for each fixed $p$. If this outcome rectangle has error probability
$\alpha_d$, the same proof gives coverage at least
$1-\alpha_A-\alpha_q-\alpha_d$. Parameter correlations excluded by the
rectangle can make these bounds conservative.

\paragraph{Omitted classes.}
A separate contamination model is needed if some customers lie outside the
calibrated taxonomy. Let their mass be $\eta\in[0,\bar\eta]$, their total action
masses be $u_a\geq0$, and known-class masses be $z_k\geq0$. Replace the mass and
frequency constraints by
\[
 \sum_k z_k=1-\eta,\quad \sum_a u_a=\eta,\quad
 \underline q_a\leq\sum_kJ_{ak}+u_a\leq\overline q_a,
\]
and replace $p$ by $z$ in the channel constraints. If unknown-class contrasts
lie in $[-D,D]$, introduce $t$ with $-D\eta\leq t\leq D\eta$ and optimize
$d^\top z+t$. These constraints are linear. Every allowed contaminated model
maps to these masses. Conversely, normalize nonzero known-class columns as in
the proof above; when $\eta>0$, one unknown class with channel $u/\eta$ and
contrast $t/\eta$ realizes the residual. At $\eta=0$, both residuals vanish.
Thus the extension is exact for this broad contamination model. Merely widening
an interval computed from contaminated frequencies without modifying the
observation constraints does not establish such protection.

\begin{proposition}[Calibration on a fixed heterogeneous panel]
\label{prop:heterogeneous-panel}
For class $k$, let $Y_{k1},\ldots,Y_{kn_k}$ be independent responses to a
prespecified, possibly heterogeneous prompt panel. Define
\[
 \bar A_{ak}=\frac1{n_k}\sum_i\Pr(Y_{ki}=a),\qquad
 \widehat A_{ak}=\frac1{n_k}\sum_i\mathbf1\{Y_{ki}=a\}.
\]
For $m$ categories and $K$ classes, simultaneous coordinate intervals with
radius
\[
 r_k=\sqrt{\frac{\log(2mK/\alpha_A)}{2n_k}}
\]
cover $\bar A$ with probability at least $1-\alpha_A$. Endpoints may be clipped
to $[0,1]$.
\end{proposition}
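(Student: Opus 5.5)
The plan is a per-coordinate Hoeffding bound followed by a union bound over all $mK$ coordinates. Fix a class $k$ and a category $a$. The indicators $X_i=\mathbf 1\{Y_{ki}=a\}$, $i=1,\ldots,n_k$, are independent and take values in $[0,1]$, but they are not identically distributed, because the prompt panel is heterogeneous. Their average is $\widehat A_{ak}$, and its expectation is exactly $\bar A_{ak}$. This is the reason for defining the target as the panel-averaged probability $\bar A$ rather than a single channel column.

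Hoeffding's inequality requires only independence and bounded ranges, not identical distributions. It gives
\[
 \Pr\bigl(|\widehat A_{ak}-\bar A_{ak}|\geq r\bigr)\leq 2\exp(-2n_kr^2).
\]
Substituting $r=r_k$ makes the right-hand side equal to $\alpha_A/(mK)$.

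Next take a union bound over the $mK$ pairs $(a,k)$. The probability that any coordinate interval $[\widehat A_{ak}-r_k,\widehat A_{ak}+r_k]$ misses $\bar A_{ak}$ is then at most $\alpha_A$. No independence across classes or categories is required. This matters because the coordinates within a column are dependent: they sum to one.

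Clipping to $[0,1]$ cannot remove coverage, since $\bar A_{ak}\in[0,1]$ always. If $\bar A_{ak}$ lies in the original interval, it also lies in the intersection of that interval with $[0,1]$.

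I do not expect a real technical obstacle. The only step needing care is to state clearly what is being covered. The guarantee concerns $\bar A$, the average over the fixed panel. It does not concern the channel of a field population with a different prompt mix; that gap must be handled by a drift term $\tau$, as in Theorem~\ref{thm:linear-certificate}.

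It is also worth remarking that the covered box contains at least one simplex vector, namely $\bar A_{\cdot k}$ itself, on the coverage event. On that event, the nonemptiness hypothesis of Proposition~\ref{prop:uncertainty-coverage} is satisfied automatically.
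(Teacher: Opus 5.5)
Your proposal is correct and follows the paper's proof: Hoeffding's inequality for the independent but non-identically distributed indicators (whose average has mean exactly $\bar A_{ak}$), a union bound over all $mK$ coordinates that needs no cross-coordinate independence, and the observation that clipping to $[0,1]$ cannot exclude a covered value. Your closing remark is also valid, though the paper does not make it: on the coverage event, $\bar A_{\cdot k}$ is a simplex vector inside its column box, so the nonemptiness hypothesis of Proposition~\ref{prop:uncertainty-coverage} holds there.
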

\begin{proof}
For fixed $a,k$, the indicators are independent variables in $[0,1]$, though
they need not have identical expectations. Hoeffding's inequality gives
\[
 \Pr(|\widehat A_{ak}-\bar A_{ak}|>r_k)
 \leq 2e^{-2n_kr_k^2}=\frac{\alpha_A}{mK}.
\]
A union bound over all coordinates proves simultaneous coverage. Clipping to
the probability range cannot exclude a true coordinate already covered.
\end{proof}

The result targets the fixed-panel average channel, not an arbitrary deployment
channel. Matching prompt composition or an explicit transport bound is still
required. Alternatively, independently sampled prompts from a prespecified
distribution, combined with independent stable responses, support the iid
multinomial interpretation used by the benchmark's calibration intervals.
Refusals and errors must remain declared categories or enter a missing-data
model. Dependence across calls or adaptive prompt selection is not covered by
the elementary panel argument. Simulated profiles remain synthetic experimental
inputs, irrespective of model fluency.

\section{Exploratory audit-estimator pilot}\label{sec:pilot}
Before the model experiment, we evaluated established audit-corrected estimators on fixed synthetic pools of 4,000 records. Five scenarios, four expected audit budgets, and five sampling policies each received 1,000 independent audit replicates, yielding 100,000 estimator evaluations. These are numerical evaluations, not model calls or people. Expected audit budgets are matched; independent Bernoulli selection makes realized counts random and their distribution is recorded.

The estimator adds an inverse-probability weighted residual correction to proxy predictions. Uniform, historical-error active, fixed-mixture, and robust linear-path sampling use observable information; a separate oracle accesses unavailable outcome residuals. The robust-path policy adapts \citet{robust2025} to a declared box uncertainty set. It is not a new method or a full reproduction of that paper's experiments.

In the deliberately constructed strong hidden-shift case, the target contrast changes from $.0065$ to $-.104$ while visible records and the proxy contrast $.0105$ stay fixed. At 100 expected audits, historical active sampling has RMSE $.0663$ and nominal 95\% Wald coverage 88.9\%; uniform sampling has $.0577$ and 93.9\%; the robust-path adaptation has $.0558$ and 93.4\%. Coverage Monte Carlo standard errors are approximately 1.0, .8, and .8 percentage points. Conservative Bernstein intervals cover in all recorded runs but are often uninformative. Point-estimator unbiasedness does not guarantee adequate finite-sample Wald coverage. Two complete runs reproduce all four numerical CSV outputs byte for byte.

\begin{figure}[h]
\centering\includegraphics[width=.96\textwidth]{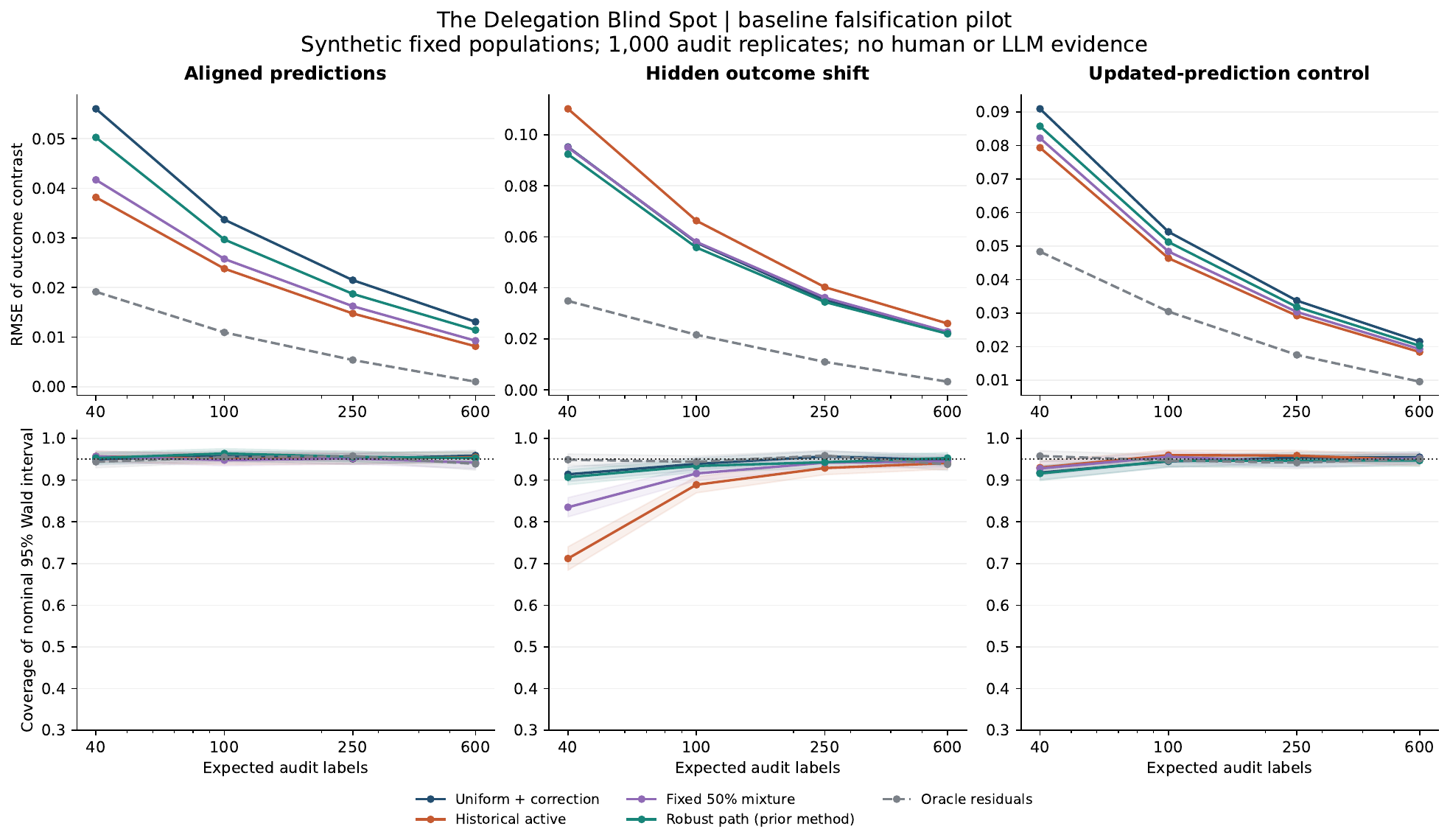}
\caption{Exploratory synthetic audit pilot. Error panels use different scales; oracle information is privileged. No claim of an effect in real delegated behavior follows from this construction.}
\end{figure}

\section{Artifact and claim audit}
\begin{center}\small
\begin{tabular}{p{.24\textwidth}p{.31\textwidth}p{.35\textwidth}}
\toprule Claim & Evidence & Limit \\\midrule
Decision-specific identification & Theorems, LP witnesses, independent numerical checks & Standard mathematics under a finite channel model \\
Sampling and drift certificates & Full proofs, exhaustive small examples & Conditional on sampling, taxonomy, and drift assumptions \\
Named model behavior & Frozen prompts and raw API records & Synthetic utilities, two model snapshots \\
Product-decision uncertainty & Held-out synthetic mixtures and declared logging schemes & No conclusion about every possible telemetry schema \\
Customer validity & Protocol and preview instrument only & No completed human study \\
\bottomrule
\end{tabular}
\end{center}
The source package includes pinned scientific dependencies, tests, exact preparation and analysis commands, checksums, an MIT license, and a citation file. Regenerating an API request does not guarantee an identical remote response. Offline analysis reproduces figures from the recorded responses. Public traces contain constructed inputs only and exclude credentials.
\end{document}